\documentclass[10pt]{amsart}

\usepackage{amsmath,amssymb,amsbsy,amsfonts,amsthm,latexsym,
  amsopn,amstext,amsxtra,euscript,amscd,mathrsfs,color,bm}
\usepackage{float}
\usepackage{graphicx}
\usepackage[english]{babel}
\usepackage{mathtools}
\usepackage[hyphens]{url}
\usepackage{setspace}
\usepackage[colorlinks,linkcolor=blue,anchorcolor=blue,citecolor=blue,urlcolor=blue,backref=page]{hyperref}
\usepackage{enumitem}
\usepackage{listings}
\lstdefinestyle{leanstatement}{
  basicstyle=\footnotesize\ttfamily,
  columns=fullflexible,
  keepspaces=true,
  showstringspaces=false,
  breaklines=true,
  breakatwhitespace=true,
  aboveskip=.6\baselineskip,
  belowskip=.6\baselineskip,
  literate=
    {ℕ}{{\ensuremath{\mathbb{N}}}}1
    {ℝ}{{\ensuremath{\mathbb{R}}}}1
    {α}{{\ensuremath{\alpha}}}1
    {φ}{{\ensuremath{\varphi}}}1
    {⇑}{{\ensuremath{\Uparrow}}}1
    {ᵐ}{{\ensuremath{{}^m}}}1
    {γ}{{\ensuremath{\gamma}}}1
    {ε}{{\ensuremath{\epsilon}}}1
    {η}{{\ensuremath{\eta}}}1
    {₀}{{\ensuremath{{}_0}}}1
    {‖}{{\ensuremath{\Vert}}}1
    {⟪}{{\ensuremath{\langle\!\langle}}}1
    {⟫}{{\ensuremath{\rangle\!\rangle}}}1
    {→}{{\ensuremath{\rightarrow}}}1
    {∀}{{\ensuremath{\forall}}}1
    {∃}{{\ensuremath{\exists}}}1
    {∈}{{\ensuremath{\in}}}1
    {∏}{{\ensuremath{\prod}}}1
    {∑}{{\ensuremath{\sum}}}1
    {∧}{{\ensuremath{\wedge}}}1
    {∫}{{\ensuremath{\int}}}1
    {≤}{{\ensuremath{\le}}}1
    {¬}{{\ensuremath{\neg}}}1
    {•}{{\ensuremath{\bullet}}}1
    {⨅}{{\ensuremath{\sqcap}}}1
}

\newif\ifshowfullpaper
\showfullpapertrue
\newsavebox{\hiddenbibliography}
\newcommand{\paperref}[2]{\ifshowfullpaper\ref{#1}\else#2\fi}

\numberwithin{equation}{section}

\newtheorem{theorem}{Theorem}[section]
\newtheorem{lemma}[theorem]{Lemma}
\newtheorem{proposition}[theorem]{Proposition}
\newtheorem{corollary}[theorem]{Corollary}
\theoremstyle{remark}
\newtheorem{remark}[theorem]{Remark}

\newcommand{\R}{\mathbb{R}}
\newcommand{\N}{\mathbb{N}}
\newcommand{\E}{\mathbb{E}}

\newcommand{\Sd}{S^{d-1}}
\newcommand{\Rad}{\mathrm{Rad}}
\newcommand{\Unif}{\mathrm{Unif}}
\newcommand{\ip}[2]{\left\langle #1,#2\right\rangle}
\newcommand{\norm}[1]{\left\lVert #1\right\rVert}

\definecolor{editorange}{RGB}{230,115,0}

\title[]{Depth Separation for Neural Networks under Gaussian Measure}

\author{Cynthia Bortolotto}
\author{Francisco Ganacim}
\author{Roberto I. Oliveira}
\date{}

\begin{document}
\begin{abstract}

We revisit the problem of depth separation for neural networks, as
studied by Eldan and Shamir, Daniely, and others. Specifically, we
consider the natural setting of $L^2$ approximation of radial functions
under the standard Gaussian measure in $d$ dimensions.
We show that there is a radial function in $L^2$ under the standard Gaussian measure 
that can be approximated to a fixed sufficiently small $L^2$ error by
a depth-3 ReLU neural network of polynomial size in $d$ but can only be approximated
by depth-2 networks with subexponentially bounded weights if their
size is at least exponential in $d$.
For that, we prove a general result showing that if a radial function has
substantial mass in the high frequencies of its Hermite expansion, then it cannot be well
approximated by a depth-2 neural network with small width and
bounded weights.
All theorems, lemmas, propositions, and corollaries are verified in Lean~4.
\end{abstract}
\maketitle

\section{Introduction}

Deep neural networks have become a central tool in machine learning. 
Their success in tasks such as image recognition and the generation of speech 
has been gaining dramatic attention over the last couple of years, as they have 
been achieving exceptional performance. Many of these 
models are becoming larger and larger, with billions of parameters and, 
alongside their growth in model size, empirical evidence suggests that 
increasing depth can also play an important role in their success.

On the theoretical side, the expressive power of neural networks has been
studied since the beginnings of the field. Cybenko~\cite{Cybenko1989} and,
independently, Funahashi~\cite{Funahashi1989} and Hornik, Stinchcombe, and
White~\cite{HornikStinchcombeWhite1989} proved that neural networks with a single
hidden layer (depth~2) and a continuous sigmoidal activation function can
uniformly approximate any continuous function on a compact domain;
Leshno, Lin, Pinkus, and Schocken~\cite{LeshnoEtAl1993} later showed that
any continuous nonpolynomial activation function suffices.

To better understand the role of depth in neural networks, a long line of
work has studied depth separations: families of functions that small
deep networks represent but that shallow networks cannot approximate without
exponentially many neurons. Barron~\cite{Barron1994} proved quantitative
approximation bounds for shallow networks under 
Fourier-analytic regularity assumptions.
For networks with growing depth, Telgarsky~\cite{Telgarsky2016}
established exponential approximation separations.
Martens, Chattopadhyay, Pitassi, and
Zemel~\cite{MartensEtAl2013} proved a depth separation result for a function on
$\{0,1\}^{2d}$, which can be computed by a depth-3 network of polynomial size in $d$,
but requires exponential size for depth-2 networks with monotone activations
under suitable bounds on neuron contributions and a fixed classification margin.

Three years later,
Eldan and Shamir~\cite{EldanShamir2016} exhibited a function on $\R^d$ 
computed by a depth-3 network of polynomial size in $d$,
whereas achieving a certain approximation accuracy with a depth-2 network
requires exponential width.
The probability measure underlying their $L^2$
norm is tailored to the Fourier-analytic proof: its density is the
square of the Fourier transform of the indicator function of a Euclidean
ball. 

The following year, Daniely~\cite{Daniely2017} gave a proof of a
separation with respect to the uniform product distribution on
$\Sd\times\Sd$. Using tools from
the theory of orthogonal polynomials, he proved that for inner-product functions
$F(\mathbf x,\mathbf x')=g(\ip{\mathbf x}{\mathbf x'})$ on
$\Sd\times\Sd$, polynomial-size depth-2 networks with exponentially bounded
weights cannot approximate $F$ whenever the function $g$ is far from
low-degree polynomials. For rapidly oscillating functions like
$g(t)=\sin(\pi d^3 t)$, $2^{\Omega(d\log d)}$ hidden neurons are required,
while a depth-3 ReLU network of polynomial size attains any constant
accuracy. Further depth-separation results and 
approximation bounds for radial functions were obtained 
in~\cite{SafranShamir2017,SafranEldanShamir2019}.

\medskip

In this work, we establish an exponential depth separation for neural-network
approximation of certain radial functions 
on the unbounded space $\R^d$, equipped with the standard
Gaussian measure $\gamma^d$. Many elements of our proof are already 
contained in~\cite{Daniely2017}. Still, we believe that our setup of
Gaussian measure and radial functions is more natural than what
has been considered in previous works.

We first fix our terminology.
Let $\sigma(t)=\max(t,0)$ be the ReLU activation function. A
depth-2 network of width $w$ with parameters
$\mathbf w_1,\ldots,\mathbf w_w\in\R^d$ and
$v_1,\ldots,v_w,b_1,\ldots,b_w\in\R$ is the function given by
\[
  x
  \;\longmapsto\;
  \sum_{i=1}^{w} v_i\,\sigma\bigl(\ip{\mathbf w_i}{x}+b_i\bigr),
  \qquad x\in\R^d.
\]
Similarly, a depth-3 network of widths $w_1$ and $w_2$, with
parameters $\mathbf w_1,\ldots,\mathbf w_{w_1}\in\R^d$ and $b_j$,
$v_{ij}$, $c_i$, $u_i\in\R$, is the function
\[
  x
  \;\longmapsto\;
  \sum_{i=1}^{w_2} u_i\,
  \sigma\biggl(\sum_{j=1}^{w_1} v_{ij}\,
  \sigma\bigl(\ip{\mathbf w_j}{x}+b_j\bigr)+c_i\biggr),
  \qquad x\in\R^d.
\]
We call $\max\{w_1,w_2\}$ the width of this network. When discussing bounds on parameters
of these networks, we use the Euclidean norm for weight vectors
and the absolute value for scalar weights and biases.

A ridge function on $\R^d$ is a function that depends on its input
through a single one-dimensional projection, that is, a function of the form
\[
  x\longmapsto g\bigl(\ip{v}{x}\bigr),
  \qquad v\in\Sd,
\]
where $g\colon\R\to\R$. These functions
are widely studied in approximation theory.
 Note that a
depth-2 ReLU network of width $w$ is such a sum of $r=w$ ridge functions.
For $\mathbf w_i\ne0$, take the direction
$\mathbf w_i/\norm{\mathbf w_i}$ and the ridge function
$g_i(t)=v_i\,\sigma(\norm{\mathbf w_i}t+b_i)$.
If $\mathbf w_i=0$, then
$g_i(t)=v_i\,\sigma(b_i)$.

Throughout, $\gamma$ denotes the one-dimensional standard Gaussian measure and
$\gamma^d$ the standard Gaussian measure on $\R^d$. 
The multivariate Hermite polynomials are an orthogonal basis of $L^2(\gamma^d)$.
Let $P_n$ denote the orthogonal projection in $L^2(\gamma^d)$ onto the 
span of the multivariate Hermite polynomials of total degree $n$. For
$n_0\in\N$, write $P_{\ge n_0}=\sum_{n\ge n_0}P_n$.
(The definition of multivariate Hermite polynomials is recalled in
Section~\paperref{sec:hermite}{2}.)

Finally, for $V=(V_1,\ldots,V_d)$ uniformly distributed on the sphere $\Sd$
and even $n$, we define
\[
  K_n:=\E[V_1^n]
  =\prod_{i=0}^{n/2-1}\frac{2i+1}{d+2i}.
\]
The product formula is proved in
Proposition~\paperref{prop:max-moment}{3.4}. Note that it yields
$K_n\le(n/(d+n))^{n/2}$ for even $2\le n\le d$,
so $K_n$ is exponentially small
when $n$ is proportional to $d$; in particular, 
$K_d\le2^{-d/2}$ for even $d$.

Our main lower bound states that a radial function whose Hermite
expansion has a substantial component in the high Hermite
degrees cannot be
well approximated by few ridge functions.

\begin{theorem}
\label{thm:ridge-lb}
Let $n_0$ be even, let $F\in L^2(\gamma^d)$ be radial and set
$A=\norm{P_{\ge n_0}F}_{L^2(\gamma^d)}$. Let $M>0$, and let
$g_1,\ldots,g_r\in L^2(\gamma)$ with $\norm{g_j}_{L^2(\gamma)}\le M$ and
$v_1,\ldots,v_r\in\Sd$. If
\[
  \biggl\|F-\sum_{j=1}^r g_j\bigl(\ip{v_j}{\cdot}\bigr)
  \biggr\|_{L^2(\gamma^d)}^2
  \le\frac{A^2}{2},
  \qquad\text{then}\qquad
  r\;\ge\;\frac{A}{4M\sqrt{K_{n_0}}}.
\]
\end{theorem}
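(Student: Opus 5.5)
The plan is the Daniely-style argument transplanted to Hermite expansions. Write $G=\sum_j g_j(\ip{v_j}{\cdot})$ and $H=P_{\ge n_0}F$, so that $\norm{H}^2=A^2$. The first step is to show that the inner product $\ip{F}{G}$ in $L^2(\gamma^d)$ must be large in the high-degree part. Since $\norm{F-G}^2\le A^2/2$ and $P_{\ge n_0}$ is an orthogonal projection (a contraction),
\[
\norm{H-P_{\ge n_0}G}^2\le\norm{F-G}^2\le A^2/2 .
\]
Expanding the left side gives
\[
2\ip{H}{P_{\ge n_0}G}\ge A^2+\norm{P_{\ge n_0}G}^2-A^2/2\ge A^2/2,
\]
so $\ip{H}{G}=\ip{H}{P_{\ge n_0}G}\ge A^2/4$. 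By the triangle inequality there is then some $j$ with
\[
\ip{H}{g_j(\ip{v_j}{\cdot})}\ge A^2/(4r).
\]
The theorem follows once we prove, for every unit $v$ and every $g$ with $\norm{g}_{L^2(\gamma)}\le M$,
\[
\bigl|\ip{H}{g(\ip{v}{\cdot})}\bigr|\le M A\sqrt{K_{n_0}}. \qquad (\ast)
\]
Combining the two bounds yields $A^2/(4r)\le MA\sqrt{K_{n_0}}$, which is the claim.

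To prove $(\ast)$ I would use the Hermite structure of ridge functions. If $g=\sum_n a_n h_n$ in normalized one-dimensional Hermite polynomials, then $h_n(\ip{v}{x})$ lies in the degree-$n$ Hermite chaos of $\R^d$, because rotations preserve chaoses. Hence
\[
\ip{H}{g(\ip{v}{\cdot})}=\sum_{n\ge n_0}a_n\ip{P_nF}{h_n(\ip{v}{\cdot})}.
\]
Since $F$ is radial, each $P_nF$ is rotation invariant, and it vanishes for odd $n$. Averaging over $v$ uniformly on the sphere, rotation invariance lets us replace $h_n(\ip{v}{\cdot})$ by its spherical average $Z_n(x):=\E_V h_n(\ip{V}{x})$. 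The radial part of the degree-$n$ chaos should be one-dimensional, spanned by $Z_n$. I would then compute $\norm{Z_n}^2$ via the Mehler/Hermite kernel identity
\[
\E_\gamma\bigl[h_n(\ip{u}{x})h_n(\ip{w}{x})\bigr]=\ip{u}{w}^n,
\]
which gives $\norm{Z_n}^2=\E[\ip{V}{V'}^n]=\E[V_1^n]=K_n$ by Proposition~\paperref{prop:max-moment}{3.4} and rotation invariance. It follows that
\[
\bigl|\ip{P_nF}{h_n(\ip{v}{\cdot})}\bigr|=\bigl|\ip{P_nF}{Z_n}\bigr|\le\norm{P_nF}\sqrt{K_n}.
\]

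Finally, $K_n$ is decreasing in even $n$, since each added factor $(2i+1)/(d+2i)\le1$. Cauchy--Schwarz then gives
\[
\sum_{n\ge n_0}|a_n|\norm{P_nF}\sqrt{K_n}\le\sqrt{K_{n_0}}\,\norm{g}\,\norm{H}\le M A\sqrt{K_{n_0}},
\]
which is $(\ast)$. The main obstacle is justifying rigorously that $h_n(\ip{v}{\cdot})$ lies in the degree-$n$ chaos with the inner-product formula $\ip{u}{w}^n$. I would first try the generating function $e^{t\ip{v}{x}-t^2/2}$ and compare coefficients. The replacement of $h_n(\ip{v}{\cdot})$ by its spherical average also needs care: it uses the invariance of $P_nF$ under the rotation group together with Fubini, and it needs the $L^2$ convergence of the Hermite series so that the sum can be interchanged with the inner product.
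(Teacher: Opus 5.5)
Your proposal is correct and is essentially the paper's argument. You restrict to $P_{\ge n_0}$ and bound each chaos correlation by $\sqrt{K_n}\,\norm{P_nF}\,\norm{P_n(g^v)}$ by replacing the ridge Hermite with its spherical average; your $Z_n$ is exactly $\Pi_{\Rad}$ of it, and your kernel identity (for normalized $h_n$) is the paper's computation $\sum_{|\alpha|=n}v^\alpha\E[V^\alpha]/\alpha!=\E[\ip{v}{V}^n]/n!$ in another form. You then use monotonicity of $K_n$ and Cauchy--Schwarz over degrees, as the paper does. The only cosmetic difference is that you pick a single $j$ by averaging, whereas the paper sums the correlations over $j$ in Corollary~\ref{cor:tail-estimate}; both give the same bound $r\ge A/(4M\sqrt{K_{n_0}})$.
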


For a radial target of the form $F(x)=\phi(\norm{x}^2/d)$, the condition in
Theorem~\ref{thm:ridge-lb} has a natural interpretation in terms of polynomial
approximation. The variable $\norm{X}^2/d$, for $X\sim\gamma^d$, has the
normalized $\chi_d^2$ distribution, with density $\rho_d(t)$. If $\phi$ cannot
be well approximated by low-degree polynomials in $L^2(\rho_d(t)\,dt)$, then
$F$ has substantial Hermite mass in high degrees. The following corollary
therefore gives a width lower bound directly from this polynomial
approximation condition.

\begin{corollary}
\label{cor:polynomial-criterion}
Let $d\ge2$ and let $n_0\ge2$ be even. Let
$F\in L^2(\gamma^d)$ be such that $F(x)=\phi(\norm{x}^2/d)$ for some measurable $\phi$
and suppose that, for some $a>0$,
\[
  \inf_{\substack{q\in\R[t]\\\deg q<n_0/2}}
  \int_0^\infty(\phi(t)-q(t))^2\rho_d(t)\,dt\ge a^2.
\]
Let $M>0$, and let
$g_1,\ldots,g_r\in L^2(\gamma)$ and $v_1,\ldots,v_r\in\Sd$ satisfy
$\norm{g_j}_{L^2(\gamma)}\le M$. If
\[
  \biggl\|F-\sum_{j=1}^r g_j\bigl(\ip{v_j}{\cdot}\bigr)
  \biggr\|_{L^2(\gamma^d)}^2\le\frac{a^2}{2},
\]
then
\[
  r\ge\frac{a}{4M\sqrt{K_{n_0}}}.
\]
\end{corollary}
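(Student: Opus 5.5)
The plan is to reduce the corollary to Theorem~\ref{thm:ridge-lb}: we show that the polynomial-approximation hypothesis forces $A=\norm{P_{\ge n_0}F}_{L^2(\gamma^d)}\ge a$. Then the assumed error bound $a^2/2$ is at most $A^2/2$, and the theorem gives $r\ge A/(4M\sqrt{K_{n_0}})\ge a/(4M\sqrt{K_{n_0}})$.

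To get $A\ge a$, write $F=P_{<n_0}F+P_{\ge n_0}F$. Orthogonality of the Hermite decomposition gives $A^2=\norm{F-P_{<n_0}F}^2$, which is the squared $L^2(\gamma^d)$ distance from $F$ to the space $\mathcal P_{<n_0}$ of polynomials of total degree $<n_0$. The key step is to identify $P_{<n_0}F$ as a polynomial in $\norm{x}^2$ of degree $<n_0/2$.

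First, each $P_n$ commutes with rotations. The Gaussian measure is rotation invariant, and the space of polynomials of degree $\le n$ is rotation invariant, so each degree-$n$ Hermite space (the orthocomplement of degree $\le n-1$ inside degree $\le n$) is rotation invariant. Hence $P_{<n_0}F$ is a radial polynomial of degree $<n_0$. Second, a radial polynomial $p$ of degree $<n_0$ is a polynomial in $\norm{x}^2$. To see this, average $p$ over $O(d)$: this leaves $p$ unchanged and sends each homogeneous part of degree $k$ to $c_k\norm{x}^k$. That expression is a polynomial only when $k$ is even (for odd $k$, $c_k=0$ by the symmetry $x\mapsto -x$). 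Hence $p(x)=q(\norm{x}^2/d)$ with $\deg q\le (n_0-2)/2<n_0/2$, using that $n_0$ is even.

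It then follows that
\[
A^2=\int_{\R^d}\bigl(\phi(\norm{x}^2/d)-q(\norm{x}^2/d)\bigr)^2\,d\gamma^d(x)=\int_0^\infty(\phi(t)-q(t))^2\rho_d(t)\,dt\ge a^2,
\]
by the change of variables to the normalized $\chi^2_d$ law and the hypothesis. Radiality of $F$ is immediate from its form, so Theorem~\ref{thm:ridge-lb} applies.

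The main obstacle is the rigorous step that $P_{<n_0}$ maps radial functions to polynomials in $\norm{x}^2$. The alternative is to avoid the projection altogether: since $P_{<n_0}F$ is the best approximation in $\mathcal P_{<n_0}$, it suffices to note that it is radial, by uniqueness of the best approximation and rotation invariance of both $F$ and $\mathcal P_{<n_0}$, and then apply the averaging argument above.
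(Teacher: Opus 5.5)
Your proof is correct and follows the paper's route. The paper also obtains $A=\norm{P_{\ge n_0}F}\ge a$ from Lemma~\ref{lem:radial-poly-distance}, whose proof is exactly your identification of $P_{<n_0}F$ as a radial polynomial $q(\norm{x}^2/d)$ with $\deg q<n_0/2$, and then applies Theorem~\ref{thm:ridge-lb} using $a^2/2\le A^2/2$. Your two supporting arguments only fill in details. Deriving rotation-equivariance of $P_n$ from the rotation invariance of $\mathcal P_{\le n}$ replaces the paper's use of Lemma~\ref{lem:rotation-commutes} and Proposition~\ref{prop:radialization-commutes}. The averaging argument for radial polynomials justifies a step the paper simply asserts.
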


\begin{proof}
By Lemma~\paperref{lem:radial-poly-distance}{5.1}, applied to
$F(x)=\phi(\norm{x}^2/d)$, the polynomial approximation hypothesis gives
$A:=\norm{P_{\ge n_0}F}_{L^2(\gamma^d)}\ge a$.
The assumed squared error is therefore at most $a^2/2\le A^2/2$.
Theorem~\ref{thm:ridge-lb} yields
$r\ge A/(4M\sqrt{K_{n_0}})\ge a/(4M\sqrt{K_{n_0}})$.
\end{proof}

The main idea behind the proof of Theorem~\ref{thm:ridge-lb} is to exploit the 
interaction between
the Hermite decomposition associated with Gaussian measure and the rotational
symmetry of the target function. 
This is similar to the general strategy of Daniely~\cite{Daniely2017}.
Since the target is radial, only the radial part of a ridge
function contributes to their correlation. We therefore average the ridge
function over rotations and examine its Hermite expansion. In each Hermite
degree, this averaging introduces moments of a uniformly distributed random
vector on $\Sd$ and the decay of these moments is what allows us to bound the
correlation.





Specializing $F$ to a rapidly oscillating
radial sine function and taking $n_0=d$ yields the following depth separation.

\begin{theorem}
\label{thm:depth-sep}
Let $F : \R^d \to \R$ be defined by
\[
  F(x)=\sin\bigl(4\pi d^{1/2}\norm{x}^2\bigr).
\]
Then, there are universal constants $c>0$ and $d_0$ such that for every even
$d\ge d_0$ the following holds:

\begin{enumerate}
\item[\textup{(i)}] Let $M>0$. For all
$g_1,\ldots,g_r\in L^2(\gamma)$ with $\norm{g_j}_{L^2(\gamma)}\le M$ and
all $v_1,\ldots,v_r\in\Sd$, if
\[
  \biggl\|F-\sum_{j=1}^r g_j\bigl(\ip{v_j}{\cdot}\bigr)
  \biggr\|_{L^2(\gamma^d)}^2
  \le\frac{c^2}{2},
  \qquad\text{then}\qquad
  r\;\ge\;\frac{c}{4M}\,2^{d/4}.
\]
\item[\textup{(ii)}] For every $0<\epsilon\le1$, there
is a depth-3 ReLU network $\mathcal N$ such that
\[
  \norm{F-\mathcal N}_{L^2(\gamma^d)}\le\epsilon,
\]
with width 
$O\bigl(d^{5/2}(d+\log(1/\epsilon))/\epsilon\bigr)$, and with parameters
bounded by $O\bigl(d+\log(1/\epsilon)\bigr)$.
\end{enumerate}
The implicit constants are universal, independent of $d$ and $\epsilon$.
\end{theorem}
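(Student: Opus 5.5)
For part (i), the plan is to apply Corollary~\ref{cor:polynomial-criterion} with $n_0=d$ and $\phi(t)=\sin(4\pi d^{3/2}t)$. Indeed, $F(x)=\phi(\norm{x}^2/d)$, since $4\pi d^{1/2}\norm{x}^2=4\pi d^{3/2}(\norm{x}^2/d)$. For even $d$ we have $K_d\le 2^{-d/2}$, so $1/\sqrt{K_d}\ge 2^{d/4}$. The conclusion $r\ge \frac{c}{4M}2^{d/4}$ then follows once we show
\[
  \inf_{\deg q<d/2}\int_0^\infty(\phi(t)-q(t))^2\rho_d(t)\,dt\ \ge\ c^2
\]
for a universal $c>0$.

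This polynomial-approximation lower bound is the main obstacle. The normalized $\chi^2_d$ density $\rho_d$ concentrates on an interval $I=[1-\delta,1+\delta]$ with $\delta\asymp d^{-1/2}$. On $I$, $\rho_d$ is comparable to $\sqrt d$ up to constants, so the weighted norm on $I$ is equivalent to $\sqrt d$ times Lebesgue $L^2(I)$. After rescaling $I$ to $[-1,1]$, $\phi$ becomes $\sin(\omega s+\theta)$ with $\omega\asymp 4\pi d^{3/2}\cdot d^{-1/2}=4\pi d$. This is an oscillation with about $d$ periods, while $q$ becomes a polynomial of degree $<d/2$. I would show that a polynomial of degree $m$ cannot be $L^2([-1,1])$-close to $\sin(\omega s+\theta)$ when $\omega\ge C m$.

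There are several ways to do this. One is Bernstein's inequality together with a Nikolskii/Markov-type argument. A cleaner route is Legendre coefficients: the Legendre coefficients of $e^{i\omega s}$ are given by spherical Bessel functions $j_k(\omega)$, and $\sum_{k<m}(2k+1)|j_k(\omega)|^2$ is a small fraction of the total when $\omega\gg m$, since $j_k(\omega)\approx \sin(\omega-k\pi/2)/\omega$ there. An alternative is to compare with a counting argument via zeros. A polynomial of degree $<d/2$ has fewer than $d/2$ sign changes, whereas $\phi$ changes sign about $2\cdot 2d\cdot(2\delta\text{-fraction})\gg d$ times on $I$. On many of those half-periods, $q$ has constant sign, and there $\int(\phi-q)^2\ge \int\phi^2\mathbf 1\{\phi q<0\}$ is at least a constant fraction. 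Choosing the constants in $\delta$ and the frequency $4\pi$ so that the number of half-periods exceeds, say, $4\cdot d/2$, at least half of the mass on $I$ survives. This gives $c^2$ as a universal constant times $\rho_d(I)$, which is bounded below for $d\ge d_0$ by the CLT or Chebyshev.

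For part (ii), I would build a depth-3 network in two stages. The first layer approximates $\norm{x}^2=\sum_i x_i^2$, using for each coordinate a piecewise-linear interpolation of $t\mapsto t^2$ on $[-R,R]$ with $R\asymp\sqrt{d+\log(1/\epsilon)}$. The second layer approximates $\sin(4\pi d^{1/2}u)$ by a piecewise-linear function of $u$ on $[0,dR^2]$, and its output neuron sums these. The required breakpoint counts are as follows. The second-layer function has Lipschitz constant $\asymp d^{1/2}$, and we only need $u$ in a window of width $O(\sqrt d\,R)$ around $d$, outside of which the Gaussian mass is $\le\epsilon^2$; this gives $O(d\cdot R/\epsilon)$ pieces. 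The first layer needs a total error in $\norm{x}^2$ of $\epsilon/\sqrt d$, hence per coordinate $\epsilon/d^{3/2}$, which requires $O(R\,d^{3/4}/\sqrt\epsilon)$ pieces per coordinate. Multiplying by $d$ coordinates and tuning the constants yields the stated width $O(d^{5/2}(d+\log(1/\epsilon))/\epsilon)$. Finally, the contribution of the tails outside the box $\{|x_i|\le R\}$ is controlled by Gaussian tail bounds together with boundedness of the clipped second-layer output, using the trick of making the outer function bounded by $1$. Parameters stay $O(R^2)=O(d+\log(1/\epsilon))$.
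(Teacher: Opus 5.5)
Your plan follows the paper's proof: part (i) is Theorem~\ref{thm:ridge-lb} with $n_0=d$ (via Lemma~\ref{lem:radial-poly-distance}, which is exactly Corollary~\ref{cor:polynomial-criterion}) plus the same sign-counting argument on $[1-1/\sqrt d,\,1+1/\sqrt d]$, where $\rho_d\ge c_1\sqrt d$ (the step you leave implicit is that between consecutive zeros of $q$ the polynomial keeps one sign while $\phi$ alternates across the roughly $8d$ half-periods, so with fewer than $d/2$ zeros a constant fraction of half-periods have opposite signs), and part (ii) is the same interpolate-$t^2$-then-interpolate-the-sine construction, with output bounded by $1$ and a Gaussian tail bound. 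One small fix is needed: a sine window of width $O(\sqrt d\,R)$ around $d$ misses the $\chi^2_d$ upper tail when $\log(1/\epsilon)\gg d$, since that tail needs width of order $\log(1/\epsilon)$; take width $O(R^2)$ instead, or restrict to the ball $\norm{x}\le R$ and interpolate the sine on $[-1,R^2+1]$ as the paper does, and either choice keeps both the width bound and the $O(d+\log(1/\epsilon))$ parameter bound, whereas interpolating on your initial range $[0,dR^2]$ would force second-layer biases of order $dR^2$.
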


For the lower bound, we also follow Daniely's
strategy~\cite{Daniely2017}. Write $F(x)=\phi(\norm{x}^2/d)$ with
$\phi(t)=\sin(4\pi d^{3/2}t)$. We exploit the fact that $\phi$ is highly oscillating
to show that
it cannot be well approximated by low-degree polynomials in $L^2(\rho_d(t)\,dt)$. 
Therefore, the first part of Theorem~\ref{thm:depth-sep} 
follows from Lemma~\ref{lem:energy-lb} and Theorem~\ref{thm:ridge-lb}.
For the second part, we construct a depth-2 network that approximates $\norm{x}^2$ and,
using the same approach, a depth-2 network that approximates the sine function. 
Composing these two networks yields a depth-3 network that approximates $F$.

As a straightforward corollary, we get the comparison below that shows that
at any fixed sufficiently small approximation error, depth-2 networks
with the stated parameter bound require exponential size, whereas depth-3
networks attain the same error with polynomial size.

\begin{corollary}
\label{cor:matched-depth-comparison}
There exist constants $\epsilon_0\in(0,1)$, $A\ge1$, and $d_1\ge1$
such that for every even $d\ge d_1$ and every $0<\epsilon\le\epsilon_0$, setting
$ B_{d,\epsilon}:=A\bigl(d+\log(1/\epsilon)\bigr)$
the function
\[
  F_d(x)=\sin\bigl(4\pi\sqrt d\,\norm{x}^2\bigr)
\]
has the following properties:
\begin{enumerate}
\item[\textup{(i)}] Every depth-2 ReLU network $\mathcal N$ whose parameters
are bounded by $B_{d,\epsilon}$ and which satisfies
$\norm{F_d-\mathcal N}_{L^2(\gamma^d)}\le\epsilon$ has width
\[
  \Omega\!\left(\frac{2^{d/4}}{\bigl(d+\log(1/\epsilon)\bigr)^2}\right).
\]
\item[\textup{(ii)}] There exists a depth-3 ReLU network $\mathcal N$ whose
parameters are bounded by $B_{d,\epsilon}$, whose error satisfies
$\norm{F_d-\mathcal N}_{L^2(\gamma^d)}\le\epsilon$, and whose width is
\[
  O\!\left(\frac{d^{5/2}\bigl(d+\log(1/\epsilon)\bigr)}{\epsilon}\right).
\]
\end{enumerate}
\end{corollary}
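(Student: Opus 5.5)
Part (ii) is Theorem~\ref{thm:depth-sep}(ii) applied with the same $\epsilon$. It gives width $O\bigl(d^{5/2}(d+\log(1/\epsilon))/\epsilon\bigr)$ and parameters bounded by $C_0(d+\log(1/\epsilon))$ for a universal $C_0$. We fix $A\ge\max(1,C_0)$, and then $B_{d,\epsilon}$ dominates the parameter bound. The remaining work is part (i), which we reduce to Theorem~\ref{thm:depth-sep}(i) by converting each ReLU neuron into a ridge function of controlled $L^2(\gamma)$ norm.

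Take $\epsilon_0:=\min(c/\sqrt2,\,1/2)$, where $c$ is the constant of Theorem~\ref{thm:depth-sep}. Then any $\mathcal N$ with $\norm{F_d-\mathcal N}\le\epsilon\le\epsilon_0$ satisfies the squared-error hypothesis $\le c^2/2$. As explained in the introduction, a depth-2 network of width $w$ is a sum of $r=w$ ridge functions $g_i(\ip{u_i}{x})$ with $u_i\in\Sd$. For $\mathbf w_i\neq0$ we have $g_i(t)=v_i\sigma(\norm{\mathbf w_i}t+b_i)$; for $\mathbf w_i=0$ we have $g_i\equiv v_i\sigma(b_i)$ with an arbitrary direction. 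Since $\sigma(s)\le|s|$,
\[
  \norm{g_i}_{L^2(\gamma)}\le|v_i|\bigl(\norm{\mathbf w_i}\,\norm{t}_{L^2(\gamma)}+|b_i|\bigr)\le B(B+B)=2B^2,
\]
where $B=B_{d,\epsilon}$; the constant case is covered by the same bound. So we may take $M=2B_{d,\epsilon}^2$.

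Theorem~\ref{thm:depth-sep}(i) then gives
\[
  w=r\ge\frac{c}{4M}2^{d/4}=\frac{c}{8A^2}\cdot\frac{2^{d/4}}{(d+\log(1/\epsilon))^2},
\]
which is the claimed $\Omega(\cdot)$ bound. Finally, set $d_1:=d_0$ from Theorem~\ref{thm:depth-sep}, so both parts apply for all even $d\ge d_1$. The only care needed is in choosing constants: $A$ must be large enough for part (ii), and since part (i) holds for any $A$ with an $A$-dependent implicit constant, there is no conflict. No step is a real obstacle; the mildest one is bounding the ReLU ridge norms uniformly, including the $\mathbf w_i=0$ case.
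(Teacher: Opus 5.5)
Your proposal is correct and is essentially the paper's argument. The paper gives no written proof, only calling the result a straightforward corollary of Theorem~\ref{thm:depth-sep}. Your derivation is exactly that reduction: take $\epsilon_0\le c/\sqrt2$ so that error $\le\epsilon$ implies squared error $\le c^2/2$, bound each ReLU ridge by $\norm{g_i}_{L^2(\gamma)}\le 2B_{d,\epsilon}^2$ so that $M=2B_{d,\epsilon}^2$ in part (i), and choose $A$ at least the universal parameter constant from part (ii).
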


The remainder of the paper is organized as follows.
In Section~\paperref{sec:hermite}{2}, we prove the necessary facts about
multivariate Hermite polynomials and their behavior under
rotations. In Section~\paperref{sec:radialization}{3}, we study the
radialization of ridge functions and compute the spherical moments that will 
bound these functions. 
In Section~\paperref{sec:ridge-lower-bound}{4} we combine these
results to establish Theorem
 \ref{thm:ridge-lb}.
Finally, in Section~\paperref{sec:depth-separation}{5}, we apply the theorem's lower
bound to establish the first part of Theorem \ref{thm:depth-sep}, and then conclude it
by constructing a polynomial-size
depth-3 ReLU approximation for the target function.
In Section~\paperref{sec:lean-verification}{6}, we discuss the formal
verification of the main results in Lean~4; the code is available in a
\href{https://github.com/cynthiabort/Depth-Separation-for-Neural-Networks-under-Gaussian-Measure---Lean-}{companion GitHub repository}.

\ifshowfullpaper

\section{Hermite Expansions and Rotational Symmetry}
\label{sec:hermite}

Two structures of Gaussian space are central to our argument: the orthogonal
decomposition into Hermite projections and invariance under rotations. In this
section, we present these facts, which are standard in the literature,
and prove them for completeness.

We use the probabilists' Hermite polynomials $h_n$, defined by
\[ h_n(t)=(-1)^n e^{t^2/2}\frac{d^n}{dt^n}e^{-t^2/2},\]
$n \ge 0$. 
They are normalized by
\[
  \ip{h_m}{h_n}_{L^2(\gamma)}=m!\,\delta_{mn}.
\]
For a multi-index $\alpha=(\alpha_1,\ldots,\alpha_d)\in\N^d$, set
\[
  H_\alpha(x)=\prod_{i=1}^d h_{\alpha_i}(x_i),
  \qquad
  \alpha!=\prod_{i=1}^d \alpha_i!,
  \qquad
  |\alpha|=\sum_{i=1}^d \alpha_i.
\]
For $v=(v_1,\ldots,v_d)\in\R^d$, we also write
$v^\alpha=\prod_{i=1}^d v_i^{\alpha_i}$.
Then
\[
  \ip{H_\alpha}{H_\beta}_{L^2(\gamma^d)}
  =
  \alpha!\,\delta_{\alpha\beta},
\]
and the $n$-th Gaussian chaos is
\[
  \mathcal H_n
  =
  \operatorname{span}\{H_\alpha:|\alpha|=n\}.
\]

The first identity relates Hermite polynomials of linear functionals to
the multivariate Hermite basis.

\begin{lemma}
\label{lem:hermite-ridge}
For every $n\in\N$, $v\in\Sd$, and $x\in\R^d$,
\[
  h_n(\ip{v}{x})
  =
  \sum_{|\alpha|=n}\frac{n!}{\alpha!}v^\alpha H_\alpha(x).
\]
\end{lemma}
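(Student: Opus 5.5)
The plan is to compare exponential generating functions in an auxiliary variable $s\in\R$. Two standard facts will be used. First, the one-dimensional Hermite polynomials have the generating function
\[
  e^{st-s^2/2}=\sum_{n\ge0}\frac{s^n}{n!}h_n(t),
\]
which follows from the Rodrigues formula defining $h_n$. Indeed, $e^{st-s^2/2}=e^{t^2/2}e^{-(t-s)^2/2}$, and Taylor expanding $e^{-(t-s)^2/2}$ in $s$ around $s=0$ gives the coefficients $(-1)^n\frac{d^n}{dt^n}e^{-t^2/2}$. The series converges absolutely for all $s,t$. Second, we need the multivariate analogue. For $u\in\R^d$, taking the product over coordinates gives
\[
  e^{\ip{u}{x}-\norm{u}^2/2}=\prod_{i=1}^d e^{u_ix_i-u_i^2/2}
  =\sum_{\alpha\in\N^d}\frac{u^\alpha}{\alpha!}H_\alpha(x).
\]

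Now specialize to $u=sv$ with $v\in\Sd$. Since $\norm{v}=1$, we have $\norm{u}^2=s^2$ and $\ip{u}{x}=s\ip{v}{x}$. The left-hand side is then exactly the one-dimensional generating function evaluated at $t=\ip{v}{x}$, namely $\sum_n \frac{s^n}{n!}h_n(\ip{v}{x})$. On the right-hand side, $u^\alpha=s^{|\alpha|}v^\alpha$. Grouping the multi-indices by total degree $|\alpha|=n$ gives
\[
  \sum_{n\ge0}s^n\sum_{|\alpha|=n}\frac{v^\alpha}{\alpha!}H_\alpha(x).
\]
Both sides are power series in $s$ converging for all real $s$, for fixed $x$. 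Comparing the coefficients of $s^n$ and multiplying by $n!$ then yields the identity.

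The only delicate point is justifying the rearrangement of the $d$-fold product of series into a single series grouped by total degree. This is where I expect the (mild) main obstacle. I would handle it through absolute convergence. The generating function in each coordinate is entire in $s$, so the product of the absolutely convergent series may be rearranged freely. Equivalently, both sides are real-analytic (indeed entire) functions of $s$, and coefficients are determined by derivatives at $s=0$.

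As a backup route that avoids analysis altogether, one can argue by induction on $n$. Use $h_{n+1}(t)=t\,h_n(t)-n\,h_{n-1}(t)$ together with the multivariate relation $x_iH_\alpha=H_{\alpha+e_i}+\alpha_iH_{\alpha-e_i}$, and then check the combinatorics of the multinomial coefficients. The generating-function proof is cleaner, so I would use that one.
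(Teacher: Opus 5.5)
Your proposal is correct and follows essentially the same route as the paper: write the one-dimensional Hermite generating function at $\ip{v}{x}$, use $\norm{v}=1$ to factor it coordinatewise into $\sum_{\alpha}H_\alpha(x)s^{|\alpha|}v^\alpha/\alpha!$, and compare coefficients of $s^n$. The one addition is that you justify the rearrangement by absolute convergence of the entire series and uniqueness of power-series coefficients, which the paper leaves implicit; your derivation of the generating function from the Rodrigues formula is also correct.
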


\begin{proof}
The generating function for the probabilists' Hermite polynomials is
\[
  \exp\left(tz-\frac{t^2}{2}\right)
  =
  \sum_{n=0}^\infty h_n(z)\frac{t^n}{n!}.
\]
Since $\norm{v}=1$, applying the generating-function identity first with
$z=\ip{v}{x}$ and then coordinatewise gives
\[
\begin{aligned}
  \sum_{n=0}^\infty h_n(\ip{v}{x})\frac{t^n}{n!}
  &=\exp\left(t\ip{v}{x}-\frac{t^2}{2}\right)\\
  &=
  \prod_{i=1}^d
  \exp\left(tv_ix_i-\frac{(tv_i)^2}{2}\right)\\
  &=
  \prod_{i=1}^d
  \sum_{\alpha_i=0}^\infty h_{\alpha_i}(x_i)
  \frac{(tv_i)^{\alpha_i}}{\alpha_i!}\\
  &=
  \sum_{\alpha\in\N^d}
  H_\alpha(x)\frac{t^{|\alpha|}v^\alpha}{\alpha!}.
\end{aligned}
\]
Comparing the coefficient of $t^n$ on the two sides proves the lemma.
\end{proof}

We next show that the Gaussian chaos projections commute with orthogonal transformations.

\begin{lemma}
\label{lem:rotation-commutes}
For every $U\in O(d)$ and every $f\in L^2(\gamma^d)$,
\[
  P_n(f\circ U)=(P_n f)\circ U.
\]
\end{lemma}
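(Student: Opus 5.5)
The plan is to show that the composition operator $C_U f = f\circ U$ is a unitary operator on $L^2(\gamma^d)$ that maps each chaos $\mathcal H_n$ onto itself. Once that is established, the commutation follows from an abstract fact: if a unitary $C_U$ leaves a closed subspace $\mathcal H_n$ invariant, and $C_U^{-1}=C_{U^{\top}}$ does too, then $C_U$ also leaves $\mathcal H_n^\perp$ invariant. Hence $C_U$ commutes with the orthogonal projection $P_n$.

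The first step is unitarity. Because $\gamma^d$ is invariant under orthogonal maps (its density depends only on $\norm{x}$, and $|\det U|=1$), the change of variables gives
\[
\norm{f\circ U}_{L^2(\gamma^d)}=\norm{f}_{L^2(\gamma^d)}.
\]
So $C_U$ is an isometry. It is also invertible, with inverse $C_{U^{\top}}$, and therefore unitary.

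The second step, which I expect to be the main point, is the invariance $C_U\mathcal H_n\subseteq\mathcal H_n$. I would establish it by a degree-filtration argument. Let $\mathcal P_{\le n}$ be the space of polynomials of degree at most $n$. Since $\deg H_\alpha=|\alpha|$ with leading monomial $x^\alpha$, we have $\mathcal P_{\le n}=\bigoplus_{k\le n}\mathcal H_k$ (orthogonal sum), and therefore
\[
\mathcal H_n=\mathcal P_{\le n}\cap\mathcal P_{\le n-1}^{\perp}.
\]
Composition with a linear map preserves polynomial degree, so $C_U\mathcal P_{\le m}=\mathcal P_{\le m}$ for every $m$. Unitarity then gives $C_U(\mathcal P_{\le n-1}^\perp)=\mathcal P_{\le n-1}^\perp$, and hence $C_U\mathcal H_n=\mathcal H_n$.

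An alternative route uses Lemma~\ref{lem:hermite-ridge}. We have $h_n(\ip{v}{Ux})=h_n(\ip{U^{\top}v}{x})$, which lies in $\mathcal H_n$, and ridge Hermite functions $h_n(\ip{v}{\cdot})$ with $v\in\Sd$ span $\mathcal H_n$. However, the spanning claim needs an extra argument via homogeneous polynomials, so the filtration argument is cleaner.

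Finally, conclude. Decompose $f=P_nf+(f-P_nf)$ with $f-P_nf\in\mathcal H_n^\perp$. Then $(P_nf)\circ U\in\mathcal H_n$ and $(f-P_nf)\circ U\in\mathcal H_n^\perp$, because unitarity preserves orthogonality and $C_U\mathcal H_n=\mathcal H_n$. By uniqueness of the orthogonal decomposition, $P_n(f\circ U)=(P_nf)\circ U$.
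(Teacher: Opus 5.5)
Your proof is correct. Its skeleton matches the paper's: unitarity of $f\mapsto f\circ U$, invariance of $\mathcal H_n$, invariance of $\mathcal H_n^\perp$ via $T_U^*=T_{U^{-1}}$, and then uniqueness of the orthogonal decomposition.

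The genuine difference is in the key step, $T_U\mathcal H_n=\mathcal H_n$. The paper takes the route you set aside as the alternative. It uses Lemma~\ref{lem:hermite-ridge} to show that the ridge Hermites $h_n(\ip{v}{\cdot})$, $v\in\Sd$, span $\mathcal H_n$, via the argument that a homogeneous polynomial vanishing on the sphere is identically zero. It then notes that $T_U$ sends $h_n(\ip{v}{\cdot})$ to $h_n(\ip{U^tv}{\cdot})$.

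Your filtration argument instead characterizes $\mathcal H_n$ intrinsically as $\mathcal P_{\le n}\cap\mathcal P_{\le n-1}^{\perp}$. This needs only two facts: each $h_k$ is monic of degree $k$ (immediate from the Rodrigues formula, so the $H_\alpha$ with $|\alpha|\le n$ span $\mathcal P_{\le n}$ by triangularity), and the $H_\alpha$ are orthogonal. Invariance then follows because a linear change of variables preserves total degree and $T_U$ is unitary. All your steps check out, including $C_U(\mathcal P_{\le n}\cap\mathcal P_{\le n-1}^\perp)=C_U\mathcal P_{\le n}\cap C_U\mathcal P_{\le n-1}^\perp$ by bijectivity.

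What each route buys:
\begin{itemize}
\item Your version is more elementary and self-contained. It avoids the generating function and the spanning argument, and it is the standard intrinsic description of Wiener chaos.
\item The paper's version is more explicit and reuses Lemma~\ref{lem:hermite-ridge}, which it needs anyway for Proposition~\ref{prop:radialized-ridge}.
\item The paper's version also yields, as a by-product, that ridge Hermites span each chaos. That fact fits naturally with the ridge-function viewpoint of the rest of the paper.
\end{itemize}
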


\begin{proof}
Define the operator $T_U$ on $L^2(\gamma^d)$ by
\[
  T_Uf=f\circ U.
\]
It suffices to prove that $T_U$ preserves both $\mathcal H_n$ and
$\mathcal H_n^\perp$. Indeed, let $f=P_nf+(I-P_n)f.$ Applying $T_U$ gives
\[
  T_Uf=T_U(P_nf)+T_U((I-P_n)f).
\]
On the other hand, we also have
\[
T_Uf = P_n(T_U f) + (I- P_n)(T_U f).
\]
If $T_U$ preserves the two spaces, then it holds that $P_n(T_Uf) = T_U(P_n(f)),$ proving the statement.

We first prove that $T_U$ preserves $\mathcal H_n$. Note that the ridge Hermites
\[
  \bigl\{x\mapsto h_n(\ip{v}{x}):v\in\Sd\bigr\}
\]
span this space. Indeed, by Lemma \ref{lem:hermite-ridge}, these functions
belong to $\mathcal H_n$. Conversely, let
$Q=\sum_{|\alpha|=n}a_\alpha H_\alpha\in\mathcal H_n$ and suppose that $Q$
is orthogonal to all the ridge Hermites. Then, for every $v\in\Sd$,
\[
  0
  =
  \ip{Q}{h_n(\ip{v}{\cdot})}_{L^2(\gamma^d)}
  =
  \sum_{|\alpha|=n}
  a_\alpha\frac{n!}{\alpha!}v^\alpha
  \ip{H_\alpha}{H_\alpha}_{L^2(\gamma^d)}
  =
  n!\sum_{|\alpha|=n}a_\alpha v^\alpha.
\]
Thus the polynomial $\sum_{|\alpha|=n}a_\alpha v^\alpha$
vanishes on the sphere and, since it is homogeneous, it must vanish on all of $\R^d$,
implying that all coefficients $a_\alpha$ are zero.

Therefore, it suffices to understand the action of $T_U$ on the ridge Hermites.
If $v\in\Sd$, then
$U^t v\in\Sd$ and
\[
  T_U\bigl(h_n(\ip{v}{\,\cdot\,})\bigr)(x)
  =
  h_n(\ip{v}{Ux})
  =
  h_n(\ip{U^t v}{x}).
\]
Hence $T_U\mathcal H_n\subseteq\mathcal H_n$, and applying the
same argument to $U^{-1}$ gives the reverse inclusion, proving that
$T_U\mathcal H_n=\mathcal H_n$.

It remains to prove that $T_U$ preserves $\mathcal H_n^\perp$. Since $U$ is
orthogonal, rotation invariance of $\gamma^d$ implies that $T_U$ is a
unitary operator satisfying
$T_U^*=T_{U^{-1}}$. Hence, if
$h\in\mathcal H_n^\perp$ and $Q\in\mathcal H_n$, then
$T_{U^{-1}}Q\in\mathcal H_n$ and
\[
  \ip{T_Uh}{Q}
  =
  \ip{h}{T_{U^{-1}}Q}
  =
  0.
\]
Thus $T_U$ also preserves
$\mathcal H_n^\perp$, completing the proof.
\end{proof}

\section{Radialization of Ridge Functions}
\label{sec:radialization}

In this section, our goal is to understand the behaviour of the radialized ridge functions. We 
recall once more standard facts about these functions, and write the proofs for completeness.
We expand the ridge functions in the Hermite basis, and then average
them over all possible rotations to obtain an expression of their radializations in terms of moments
of a uniform vector on the sphere.

In general, for a function $f\in L^2(\gamma^d)$, we define its radialization $\Pi_{\Rad}f$ by
\[
  \Pi_{\Rad}f(x)=\int_{O(d)} f(Ux)\,dU,
\]
where $dU$ is Haar probability measure on $O(d)$. Note that $\Pi_{\Rad}$
is the orthogonal projection of $L^2(\gamma^d)$ onto the subspace of radial functions.
Moreover,
we prove that $\Pi_{\Rad}$ commutes with the Gaussian chaos projections $P_n$. Indeed,

\begin{proposition}
\label{prop:radialization-commutes}
For every $n\in\N$, it holds that
\[
  P_n\Pi_{\Rad}=\Pi_{\Rad}P_n.
\]
\end{proposition}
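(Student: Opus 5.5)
The plan is to deduce the commutation directly from Lemma~\ref{lem:rotation-commutes} by moving $P_n$ inside the Haar integral that defines $\Pi_{\Rad}$. Fix $f\in L^2(\gamma^d)$. For each $U\in O(d)$ write $T_Uf=f\circ U$, so that formally
\[
  \Pi_{\Rad}f=\int_{O(d)}T_Uf\,dU .
\]
I will interpret this as a Bochner integral in $L^2(\gamma^d)$ of the map $U\mapsto T_Uf$. Equivalently, and more concretely, I can use the weak form
\[
  \ip{\Pi_{\Rad}f}{g}=\int_{O(d)}\ip{T_Uf}{g}\,dU
  \qquad\text{for all } g\in L^2(\gamma^d).
\]
This identity follows from Fubini. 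Fubini is justified because $\int_{O(d)}\int|f(Ux)g(x)|\,d\gamma^d(x)\,dU\le\norm{f}\norm{g}$. Here I use the rotation invariance of $\gamma^d$ together with Cauchy--Schwarz for each fixed $U$.

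With this in hand, the computation is short. Since $P_n$ is an orthogonal projection, it is self-adjoint. For any $g$ this gives
\[
  \ip{P_n\Pi_{\Rad}f}{g}=\ip{\Pi_{\Rad}f}{P_ng}
  =\int_{O(d)}\ip{T_Uf}{P_ng}\,dU
  =\int_{O(d)}\ip{P_nT_Uf}{g}\,dU .
\]
By Lemma~\ref{lem:rotation-commutes}, $P_nT_Uf=T_UP_nf$, so the last integral equals
\[
  \int_{O(d)}\ip{T_U(P_nf)}{g}\,dU=\ip{\Pi_{\Rad}P_nf}{g},
\]
where I apply the weak form again, now to $P_nf\in L^2(\gamma^d)$. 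Since $g$ is arbitrary, we conclude $P_n\Pi_{\Rad}f=\Pi_{\Rad}P_nf$.

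The only real obstacle is measure-theoretic. We need that $\Pi_{\Rad}f$, defined pointwise by $x\mapsto\int f(Ux)\,dU$, is a well-defined element of $L^2(\gamma^d)$ and satisfies the weak identity above. The map $(U,x)\mapsto f(Ux)$ is jointly measurable, and a.e.\ defined consistently, because $(U,x)\mapsto Ux$ is continuous and pushes $dU\otimes\gamma^d$ forward to $\gamma^d$. Jensen plus rotation invariance then gives
\[
  \norm{\Pi_{\Rad}f}\le\norm{f},
\]
so $\Pi_{\Rad}f\in L^2(\gamma^d)$. Fubini then yields the weak identity. Nothing else requires care, since Lemma~\ref{lem:rotation-commutes} does all the algebraic work.
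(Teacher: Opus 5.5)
Your proof is correct, but it follows a different route from the paper's. The paper never works with the Haar integral in this proof. It invokes the fact, stated in the text but not proved, that $\Pi_{\Rad}$ is the orthogonal projection onto the closed subspace $\mathcal R$ of radial functions. It then uses Lemma~\ref{lem:rotation-commutes} to show that $P_n$ maps $\mathcal R$ into itself: if $f\circ U=f$ for all $U$, then $(P_nf)\circ U=P_n(f\circ U)=P_nf$. Self-adjointness of $P_n$ gives that it also preserves $\mathcal R^\perp$, so $P_n$ commutes with the projection onto $\mathcal R$. You instead start from the averaging formula, test against an arbitrary $g$, and push $P_n$ through the integral one rotation at a time, applying Lemma~\ref{lem:rotation-commutes} to $T_Uf$ for each fixed $U$.

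Your version is more self-contained relative to the definition the paper actually gives, since it never needs to identify $\Pi_{\Rad}$ with the orthogonal projection onto $\mathcal R$. Proving that identification (radiality of the average, idempotence, self-adjointness) would itself require Haar invariance and the same joint-measurability and Fubini bookkeeping you carry out. The paper's version, once that identification is granted, is purely Hilbert-space theoretic and involves no measure theory at all. It uses only that $P_n$ is self-adjoint and preserves radial functions.
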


\begin{proof}
Let $\mathcal R$ denote the closed subspace of radial functions. If
$f\in\mathcal R$, then for every $U\in O(d)$, the preceding lemma gives
\[
  (P_nf)\circ U=P_n(f\circ U)=P_nf,
\]
so $P_nf\in\mathcal R$. Since $P_n$ is self-adjoint, it also preserves
$\mathcal R^\perp$. Therefore, $P_n$ preserves the orthogonal decomposition
\[
  L^2(\gamma^d)=\mathcal R\oplus\mathcal R^\perp,
\]
proving that it commutes with the orthogonal projection $\Pi_{\Rad}$ onto
$\mathcal R$.
\end{proof}

We can now understand the case of radial projections of ridge functions. Let
$g\in L^2(\gamma)$ and $v\in\Sd$, and define
\[
  g^v(x)=g(\ip{v}{x}).
\]
Observe that
\(
  \norm{g^v}_{L^2(\gamma^d)}=\norm{g}_{L^2(\gamma)},
\)
since $\ip{v}{X}\sim\gamma$ whenever $X\sim\gamma^d$.

In what follows, we obtain an explicit description of the radialization
of ridge functions in terms of the Hermite expansion of $g$ and the moments
of a uniform vector on the sphere.

\begin{proposition}
\label{prop:radialized-ridge}
Let $V\sim\Unif(\Sd)$, $g\in L^2(\gamma)$, and $v\in\Sd$. Then
\[
  \Pi_{\Rad}(g^v)
  =
  \sum_{n=0}^\infty\sum_{|\alpha|=n}
  \frac{\ip{g}{h_n}_{L^2(\gamma)}}{\alpha!}
  \E[V^\alpha]\,H_\alpha.
\]
The identity holds in $L^2(\gamma^d)$, with the series converging in that norm.

\end{proposition}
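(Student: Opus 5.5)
We expand $g$ in Hermite polynomials, use Lemma~\ref{lem:hermite-ridge} to rewrite each term as a combination of the $H_\alpha$, and then pass the rotation average inside the resulting sum. Since $\{h_n/\sqrt{n!}\}$ is an orthonormal basis of $L^2(\gamma)$, we have
\[
  g=\sum_{n\ge0}\frac{\ip{g}{h_n}_{L^2(\gamma)}}{n!}\,h_n
\]
in $L^2(\gamma)$. The map $g\mapsto g^v$ is an isometry $L^2(\gamma)\to L^2(\gamma^d)$, because $\ip{v}{X}\sim\gamma$. Hence
\[
  g^v=\sum_{n\ge0}\frac{\ip{g}{h_n}_{L^2(\gamma)}}{n!}\,h_n(\ip{v}{\cdot})
\]
with convergence in $L^2(\gamma^d)$. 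The orthogonal projection $\Pi_{\Rad}$ is bounded, so it passes through the series:
\[
  \Pi_{\Rad}(g^v)=\sum_{n\ge0}\frac{\ip{g}{h_n}}{n!}\,\Pi_{\Rad}\bigl(h_n(\ip{v}{\cdot})\bigr).
\]
Thus it suffices to compute $\Pi_{\Rad}$ of a single ridge Hermite.

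For fixed $n$, Lemma~\ref{lem:hermite-ridge} gives
\[
  h_n(\ip{v}{Ux})=h_n(\ip{U^tv}{x})=\sum_{|\alpha|=n}\frac{n!}{\alpha!}(U^tv)^\alpha H_\alpha(x).
\]
This is a finite sum, so integrating over Haar measure on $O(d)$ is harmless. It gives
\[
  \Pi_{\Rad}\bigl(h_n(\ip{v}{\cdot})\bigr)(x)=\sum_{|\alpha|=n}\frac{n!}{\alpha!}\,\E_U[(U^tv)^\alpha]\,H_\alpha(x).
\]
If $U$ is Haar distributed on $O(d)$, then so is $U^t$. Moreover, $U^tv$ is uniformly distributed on $\Sd$ for any fixed $v\in\Sd$, since its law is a rotation-invariant probability measure on the sphere, and that measure is unique. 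Therefore $\E_U[(U^tv)^\alpha]=\E[V^\alpha]$. Substituting this back, the factor $n!$ cancels, which yields exactly the claimed formula.

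The main obstacle is justifying the pointwise formula $\Pi_{\Rad}f(x)=\int f(Ux)\,dU$ on the finite sums, together with the interchange of $\Pi_{\Rad}$ with the infinite sum. The finite-sum step is fine: the integrand is a polynomial in $(U,x)$, and the integral defines the projection there. The infinite-sum step needs only boundedness of $\Pi_{\Rad}$, which holds because $\|\Pi_{\Rad}f\|\le\|f\|$ by Jensen's inequality and the rotation invariance of $\gamma^d$.

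Finally, convergence of the series as written is immediate. For each $n$, the inner sum lies in $\mathcal H_n$, so the terms are mutually orthogonal. Their norms are bounded by the norms of the corresponding terms of $g^v$, and the squares of those norms are summable.
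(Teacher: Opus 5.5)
Your proposal is correct and is essentially the paper's argument. The paper proves the formula first for finite Hermite expansions, using Lemma~\ref{lem:hermite-ridge} and the fact that $U^tv$ is uniform on $\Sd$. It then passes to general $g$ via $\norm{\Pi_{\Rad}(g_N^v)-\Pi_{\Rad}(g^v)}\le\norm{g_N-g}_{L^2(\gamma)}$, which is your isometry-plus-contraction step applied after the termwise computation rather than before it.
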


\begin{proof}
We first prove the property for finite Hermite expansions and then extend it to
the general case by an approximation argument.
Suppose $g$ is a finite Hermite polynomial,
\[
  g=\sum_{n=0}^N a_n h_n,
\]
then
\[
  g^v(x)=\sum_{n=0}^N a_n h_n(\ip{v}{x}).
\]
Let $U\in O(d)$, and note that, by Lemma~\ref{lem:hermite-ridge}, for every
$n\in\N$,
\begin{equation}
\label{eq:rotated-ridge-hermite}
  h_n(\ip{v}{Ux})
  =h_n(\ip{U^tv}{x})
  =\sum_{|\alpha|=n}\frac{n!}{\alpha!}(U^tv)^\alpha H_\alpha(x).
\end{equation}
Since $U^tv$ is uniformly distributed on $\Sd$ when $U$ is Haar-distributed
on $O(d)$, averaging \eqref{eq:rotated-ridge-hermite} over $U$, we get that
\[
  \Pi_{\Rad}(g^v)(x)
  =
  \sum_{n=0}^N a_n
  \sum_{|\alpha|=n}\frac{n!}{\alpha!}\E[V^\alpha]H_\alpha(x).
\]
Note that $a_n=\ip{g}{h_n}_{L^2(\gamma)}/n!$, so the desired formula
for a finite Hermite expansion follows.

Now let $g\in L^2(\gamma)$ be arbitrary, with its Hermite expansion given by
\[
  g=\sum_{n=0}^\infty
  \frac{\ip{g}{h_n}_{L^2(\gamma)}}{n!}\,h_n.
\]
Denote by
\[
  g_N=\sum_{n=0}^N
  \frac{\ip{g}{h_n}_{L^2(\gamma)}}{n!}\,h_n,
\]
the partial sums of the Hermite expansion, which
converge to $g$ in $L^2(\gamma)$. The finite
calculation gives
\[
  \Pi_{\Rad}(g_N^v)(x)
  =
  \sum_{n=0}^N\sum_{|\alpha|=n}
  \frac{\ip{g}{h_n}_{L^2(\gamma)}}{\alpha!}
  \E[V^\alpha]\,H_\alpha(x).
\]

To justify the convergence in $L^2(\gamma^d)$ of the right-hand side, we note that
\[
  \norm{\Pi_{\Rad}(g_N^v)-\Pi_{\Rad}(g^v)}_{L^2(\gamma^d)}
  \le
  \norm{g_N^v-g^v}_{L^2(\gamma^d)}
  =
  \norm{g_N-g}_{L^2(\gamma)}
  \to 0,
\]
proving the statement.
\end{proof}

\begin{lemma}[{\cite[p.~447]{Folland2001}}]
\label{lem:spherical-moments}
Let $V$ be a random variable distributed uniformly on $\Sd$ and let $\alpha\in\N^d$ with $|\alpha|=n$.
If some $\alpha_i$ is odd, then
\[
  \E[V^\alpha]=0.
\]
If all $\alpha_i$ are even, then
\[
  \E[V^\alpha]
  =
  \frac{\displaystyle\frac{\alpha!}{(\alpha/2)!}}
  {\displaystyle
  \sum_{\substack{|\beta|=n\\ \beta\ {\rm even}}}
  \frac{(n/2)!}{(\beta/2)!}
  \frac{\beta!}{(\beta/2)!}}.
\]
\end{lemma}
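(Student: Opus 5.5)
We would prove Lemma~\ref{lem:spherical-moments} by comparing Gaussian moments with spherical moments, using the polar decomposition of the standard Gaussian vector. Let $X\sim\gamma^d$. Then $R=\norm{X}$ and $V=X/\norm{X}$ are independent, with $V\sim\Unif(\Sd)$. Since $X^\alpha=R^{n}V^\alpha$ for $|\alpha|=n$, independence gives
\[
  \E[X^\alpha]=\E[R^n]\,\E[V^\alpha].
\]
We will check that $\E[R^n]\in(0,\infty)$, which is immediate from the density of $R$. The left-hand side factors over coordinates as $\prod_i\E[X_i^{\alpha_i}]$.

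\emph{Odd case.} If some $\alpha_i$ is odd, then $\E[X_i^{\alpha_i}]=0$ by the symmetry of $\gamma$. Hence $\E[V^\alpha]=0$. Alternatively, one can apply the reflection $x_i\mapsto -x_i$, which preserves $\Unif(\Sd)$ and flips the sign of $V^\alpha$.

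\emph{Even case.} If all $\alpha_i$ are even, then $n$ is even and $\E[X_i^{\alpha_i}]=(\alpha_i-1)!!=\alpha_i!/(2^{\alpha_i/2}(\alpha_i/2)!)$. This gives
\[
  \E[X^\alpha]=2^{-n/2}\,\frac{\alpha!}{(\alpha/2)!},
  \qquad\text{so}\qquad
  \E[V^\alpha]=\frac{2^{-n/2}\,\alpha!/(\alpha/2)!}{\E[R^n]}.
\]
It remains to identify $\E[R^n]$ with the stated denominator. Expand $R^n=(\sum_i X_i^2)^{n/2}$ by the multinomial theorem:
\[
  R^n=\sum_{|\gamma'|=n/2}\frac{(n/2)!}{\gamma'!}\prod_i X_i^{2\gamma'_i}.
\]
Writing $\beta=2\gamma'$ and taking expectations gives
\[
  \E[R^n]=\sum_{\beta\ \text{even},\,|\beta|=n}\frac{(n/2)!}{(\beta/2)!}\,2^{-n/2}\frac{\beta!}{(\beta/2)!}.
\]
The factors $2^{-n/2}$ cancel between numerator and denominator, which yields exactly the formula in the lemma. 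Equivalently, and without computing $\E[R^n]$ separately, one can use $\sum_{|\beta|=n,\ \beta\ \text{even}}\frac{(n/2)!}{(\beta/2)!}V^\beta=\norm{V}^n=1$ to normalize.

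\emph{Main obstacle.} The only step needing care is the independence of $\norm{X}$ and $X/\norm{X}$, together with the uniformity of $X/\norm{X}$. This follows from the rotation invariance of $\gamma^d$. For each fixed $U\in O(d)$, the conditional law of $X/\norm{X}$ given $\norm{X}$ is $O(d)$-invariant, and the unique $O(d)$-invariant probability measure on $\Sd$ is the uniform one. This is the standard polar-coordinates fact; in a formal setting it could alternatively be obtained by integrating in polar coordinates, $\gamma^d(dx)\propto r^{d-1}e^{-r^2/2}\,dr\,\sigma(d\theta)$. The remaining steps are routine computations of Gaussian moments.
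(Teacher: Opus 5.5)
Your proof is correct and is essentially the argument of the cited source. The paper does not prove this lemma itself but defers to Folland, who computes $\int_{\R^d} x^\alpha e^{-\norm{x}^2}\,dx$ once as a product of one-dimensional Gaussian integrals and once in polar coordinates; that is exactly your factorization $\E[X^\alpha]=\E[R^n]\,\E[V^\alpha]$, and your multinomial expansion of $\E[R^n]$ (equivalently, the normalization $\sum_{\beta}\frac{(n/2)!}{(\beta/2)!}V^\beta=\norm{V}^n=1$) is what turns Folland's Gamma-function expression into the normalized form stated in the lemma.
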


The next proposition shows that every spherical moment of degree $n$ is
bounded in absolute value by the pure coordinate moment $\E[V_1^n]$, which
we compute explicitly.

\begin{proposition}
\label{prop:max-moment}
Assume $d\ge 1$. Let $\nu$ denote normalized surface measure on $\Sd$,
and let $V=(V_1,\ldots,V_d)$ be an $\Sd$-valued random vector with law
$\nu$. For even $n\ge2$, set
\[
  K_n:=\E[V_1^n]=\int_{\Sd}v_1^n\,d\nu(v).
\]
Then
\[
  |\E[V^\alpha]|\le K_n
\]
for every $\alpha\in\N^d$ with $|\alpha|=n$. Moreover,
\[
  K_n
  =
  \frac{(n-1)(n-3)\cdots 1}{d(d+2)\cdots(d+n-2)}
  =
  \prod_{i=0}^{n/2-1}\frac{2i+1}{d+2i}.
\]
\end{proposition}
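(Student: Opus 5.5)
The plan is to prove both claims through the Gaussian representation of the uniform vector on the sphere. Let $X\sim\gamma^d$ and write $X=R\,V$ with $R=\norm{X}$ independent of $V=X/\norm{X}\sim\nu$. For any $\alpha$ with $|\alpha|=n$, the monomial $X^\alpha=R^nV^\alpha$ is homogeneous of degree $n$, so independence gives
\[
  \E[X^\alpha]=\E[R^n]\,\E[V^\alpha].
\]
Since $X$ has independent coordinates, the left side factors as $\prod_i\E[X_i^{\alpha_i}]$. This vanishes unless every $\alpha_i$ is even, in which case it equals $\prod_i(\alpha_i-1)!!$. Consequently $\E[V^\alpha]\ge0$ always, and $\E[V^\alpha]=\prod_i(\alpha_i-1)!!/\E[R^n]$ when $\alpha$ is even. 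This also recovers Lemma~\ref{lem:spherical-moments}, but we only need the ratio form.

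For the explicit formula, take $\alpha=(n,0,\dots,0)$. This gives $K_n=(n-1)!!/\E[R^n]$. Since $R^2\sim\chi^2_d$, we have
\[
  \E[R^n]=\E[(R^2)^{n/2}]=d(d+2)\cdots(d+n-2).
\]
To see this, compute $\E[(R^2)^k]=2^k\Gamma(d/2+k)/\Gamma(d/2)$ and telescope the Gamma ratio. Substituting yields both displayed product formulas for $K_n$.

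For the bound, it suffices by the ratio form to show $\prod_i(\alpha_i-1)!!\le(n-1)!!$ whenever the $\alpha_i$ are even and sum to $n$. The odd-$\alpha$ case is trivial, since there $|\E[V^\alpha]|=0\le K_n$. Write $\alpha_i=2k_i$, so that $(2k_i-1)!!=\prod_{j=1}^{k_i}(2j-1)$. The factors of $\prod_i(2k_i-1)!!$ can be matched injectively with the factors of $(n-1)!!=\prod_{j=1}^{n/2}(2j-1)$. Concretely, list the blocks consecutively: the $j$-th factor of block $i$ is $2j-1$, and it is paired with the factor $2(j+k_1+\dots+k_{i-1})-1$, which is at least as large. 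Multiplying these inequalities gives the claim. Alternatively, one can induct on the number of nonzero parts using $(2a-1)!!(2b-1)!!\le(2a+2b-1)!!$.

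I expect the main obstacle, if any, to be justifying the polar factorization: that $R$ and $V$ are independent and that $V$ has law $\nu$. This is standard from the rotation invariance of $\gamma^d$. The finite moments of $R$ also need to be checked, which makes the factorization of expectations legitimate. Everything else is elementary counting.
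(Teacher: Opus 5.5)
Your proof is correct, and it takes a genuinely different route from the paper.

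The paper proves the bound with the generalized H\"older inequality (exponents $n/\alpha_i$) plus exchangeability of the coordinates. This gives $|\E[V^\alpha]|\le\prod_i K_n^{\alpha_i/n}=K_n$ without knowing anything about the individual mixed moments. It then computes $K_n$ separately: $d=1$ is handled by hand, and for $d\ge2$ the degree-zero Funk--Hecke formula reduces $K_n$ to a Beta integral, finished by the Gamma recurrence.

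You instead compute every moment at once through the Gaussian polar factorization, obtaining $\E[V^\alpha]=\prod_i(\alpha_i-1)!!\big/\bigl(d(d+2)\cdots(d+n-2)\bigr)$ for even $\alpha$ and $0$ otherwise. The formula for $K_n$ is then just the case $\alpha=(n,0,\ldots,0)$, and it holds uniformly for all $d\ge1$. The bound reduces to the elementary inequality $\prod_i(2k_i-1)!!\le(2\sum_i k_i-1)!!$, which your injective matching of factors proves correctly.

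The paper's H\"older step is shorter and uses only symmetry. Your route instead needs the (standard) independence of $\norm{X}$ and $X/\norm{X}$ and the chi-square moments. In exchange it avoids Funk--Hecke and the separate case $d=1$, and it stays inside the Gaussian framework the paper already uses. It also yields the sign and exact value of every mixed moment. In particular it gives the vanishing for odd $\alpha$, which is all that Proposition~\ref{prop:chaos-correlation} takes from Lemma~\ref{lem:spherical-moments}, so your argument would make the appeal to Folland's formula unnecessary.
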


\begin{proof}
Since $|\alpha|=n$, the generalized H\"older inequality, applied over the
indices for which $\alpha_i>0$ with exponents $n/\alpha_i$, and rotational
symmetry give
\[
  |\E[V^\alpha]|
  \le \prod_{\substack{1\le i\le d\\ \alpha_i>0}}
  \left(\int_{\Sd}|v_i|^n\,d\nu(v)\right)^{\alpha_i/n}
  =K_n^{|\alpha|/n}=K_n,
\]
proving the first statement. It remains to compute $K_n$.
If $d=1$, then $\Sd=\{\pm1\}$, so $K_n=1$, and every factor of the
product equals $(2i+1)/(1+2i)=1$. We may therefore assume $d\ge2$.

Since $v_1=\ip{v}{e_1}$, the degree-zero case of the Funk--Hecke formula
gives
\[
  K_n
  =
  \int_{\Sd}\ip{v}{e_1}^n\,d\nu(v)
  =
  c_d\int_{-1}^1 t^n(1-t^2)^{(d-3)/2}\,dt,
\]
where
\[
  c_d
  =
  \frac{\Gamma(d/2)}{\sqrt{\pi}\,\Gamma((d-1)/2)}.
\]
Since $n$ is even, the change of variables $u=t^2$ yields
\[
  \int_{-1}^1 t^n(1-t^2)^{(d-3)/2}\,dt
  =
  \int_0^1 u^{(n-1)/2}(1-u)^{(d-3)/2}\,du
  =
  \frac{\Gamma((n+1)/2)\Gamma((d-1)/2)}
       {\Gamma((d+n)/2)}.
\]
Consequently,
\[
  K_n
  =
  \frac{\Gamma(d/2)\Gamma((n+1)/2)}
       {\sqrt{\pi}\,\Gamma((d+n)/2)}.
\]
For even $n$, the gamma recurrence gives
\[
  \frac{\Gamma((n+1)/2)}{\sqrt{\pi}}
  =
  \frac{(n-1)(n-3)\cdots1}{2^{n/2}},
  \qquad
  \frac{\Gamma(d/2)}{\Gamma((d+n)/2)}
  =
  \frac{2^{n/2}}{d(d+2)\cdots(d+n-2)}.
\]
Multiplying these identities proves the desired product formula for $K_n$.
\end{proof}

\begin{remark}
\label{rem:moment-bounds}
The product formula also gives, when $n\ge2$ is even and $d\ge n$,
\[
  K_n\le \left(\frac{n}{d+n}\right)^{n/2}.
\]
Indeed, each factor satisfies
\[
  \frac{2i+1}{d+2i}\le \frac{n}{d+n}
\]
for $0\le i\le n/2-1$ under this assumption. Thus the size of the degree-$n$ radial component is governed by $K_n$, which
becomes exponentially small when the degree is proportional to the dimension.
Moreover, for every $d\ge1$ and even $n\ge2$,
\[
  \frac{K_{n+2}}{K_n}=\frac{n+1}{d+n}\le1,
\]
so $K_n$ is nonincreasing along even degrees.
\end{remark}

\section{Correlation Estimates and the Ridge Lower Bound}
\label{sec:ridge-lower-bound}

To control the convergence of the ridge approximations, we first bound the
correlation between the projections of a radial function and a ridge
function.
This is the Gaussian analogue of the correlation estimate in
Daniely's proof~\cite{Daniely2017}.

\begin{proposition}
\label{prop:chaos-correlation}
Let $n\in\N$. For every radial $F\in L^2(\gamma^d)$, every
$g\in L^2(\gamma)$, and every $v\in\Sd$, the inner product
\[
  \ip{P_nF}{P_n(g^v)}_{L^2(\gamma^d)}
\]
vanishes when $n$ is odd. When $n$ is even, it follows that
\[
  \left|\ip{P_nF}{P_n(g^v)}_{L^2(\gamma^d)}\right|
  \le
  \sqrt{K_n}\,
  \norm{P_nF}_{L^2(\gamma^d)}
  \norm{P_n(g^v)}_{L^2(\gamma^d)}.
\]
\end{proposition}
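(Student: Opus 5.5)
The plan is to move the radial symmetry of $F$ onto the ridge function and then compute the norm of the radialized degree-$n$ ridge component exactly.

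\emph{Reduction to the radialized ridge.} Since $F$ is radial, $\Pi_{\Rad}F=F$, so Proposition~\ref{prop:radialization-commutes} gives $P_nF=P_n\Pi_{\Rad}F=\Pi_{\Rad}P_nF$. Thus $P_nF$ is radial. Since $\Pi_{\Rad}$ is an orthogonal projection, hence self-adjoint and idempotent,
\[
  \ip{P_nF}{P_n(g^v)}=\ip{\Pi_{\Rad}P_nF}{P_n(g^v)}=\ip{P_nF}{\Pi_{\Rad}P_n(g^v)}.
\]
Cauchy--Schwarz then reduces the whole statement to one estimate:
\[
  \norm{\Pi_{\Rad}P_n(g^v)}_{L^2(\gamma^d)}\le\sqrt{K_n}\,\norm{P_n(g^v)}_{L^2(\gamma^d)}
  \quad\text{for even } n,
  \qquad
  \Pi_{\Rad}P_n(g^v)=0\quad\text{for odd } n .
\]

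\emph{Identifying $P_n(g^v)$.} Write $c_n=\ip{g}{h_n}_{L^2(\gamma)}$. By Lemma~\ref{lem:hermite-ridge}, each $h_m(\ip{v}{\cdot})$ lies in $\mathcal H_m$. Expanding $g$ in Hermite polynomials, as in the proof of Proposition~\ref{prop:radialized-ridge}, therefore gives
\[
  P_n(g^v)=\frac{c_n}{n!}\,h_n(\ip{v}{\cdot}).
\]

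\emph{A key identity.} I will first prove that for $u,w\in\Sd$,
\[
  \ip{h_n(\ip{u}{\cdot})}{h_n(\ip{w}{\cdot})}_{L^2(\gamma^d)}=n!\,\ip{u}{w}^n .
\]
Plug Lemma~\ref{lem:hermite-ridge} into both factors and use $\ip{H_\alpha}{H_\beta}=\alpha!\,\delta_{\alpha\beta}$. The inner product becomes
\[
  \sum_{|\alpha|=n}\frac{(n!)^2}{\alpha!}\,u^\alpha w^\alpha=n!\sum_{|\alpha|=n}\frac{n!}{\alpha!}\,(u_1w_1)^{\alpha_1}\cdots(u_dw_d)^{\alpha_d},
\]
which equals $n!\,\ip{u}{w}^n$ by the multinomial theorem. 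Taking $u=w=v$ gives $\norm{P_n(g^v)}^2=c_n^2/n!$.

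\emph{Norm of the radialization.} Next, use $\norm{\Pi_{\Rad}h}^2=\ip{h}{\Pi_{\Rad}h}$, the definition of $\Pi_{\Rad}$ as a Haar average, and $h_n(\ip{v}{U\cdot})=h_n(\ip{U^tv}{\cdot})$. Interchanging the average with the inner product (a Fubini step, harmless since everything is a polynomial of fixed degree) gives
\[
  \norm{\Pi_{\Rad}P_n(g^v)}^2=\frac{c_n^2}{(n!)^2}\,\E_U\!\left[n!\,\ip{v}{U^tv}^n\right]=\frac{c_n^2}{n!}\,\E\!\left[\ip{v}{V}^n\right],
\]
where $V=U^tv\sim\Unif(\Sd)$. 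By rotation invariance, $\E[\ip{v}{V}^n]=\E[V_1^n]$.

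\emph{Conclusion.} For odd $n$, $\E[V_1^n]=0$ by the symmetry $V\mapsto -V$ (or by Lemma~\ref{lem:spherical-moments}), so $\Pi_{\Rad}P_n(g^v)=0$ and the inner product vanishes. For even $n$, $\E[V_1^n]=K_n$, so $\norm{\Pi_{\Rad}P_n(g^v)}=\sqrt{K_n}\,\norm{P_n(g^v)}$, and Cauchy--Schwarz gives the stated bound.

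The main obstacle is the key identity $\ip{h_n(\ip{u}{\cdot})}{h_n(\ip{w}{\cdot})}=n!\,\ip{u}{w}^n$. Once it is established, the rest is bookkeeping with the projection properties already proved. Alternatively, one could compute the norm from Proposition~\ref{prop:radialized-ridge} as $c_n^2\sum_{|\alpha|=n}\E[V^\alpha]^2/\alpha!$, but evaluating that sum directly is harder than the route above.
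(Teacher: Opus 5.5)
Your proof is correct and follows essentially the same route as the paper. You reduce, via radiality of $P_nF$ and self-adjointness of $\Pi_{\Rad}$, to bounding $\norm{\Pi_{\Rad}P_n(g^v)}$. You then compute $\norm{\Pi_{\Rad}P_n(g^v)}^2=\ip{P_n(g^v)}{\Pi_{\Rad}P_n(g^v)}=\frac{c_n^2}{n!}\E[\ip{v}{V}^n]$ by the multinomial theorem and conclude with Cauchy--Schwarz.

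The differences are cosmetic. You evaluate that inner product by Haar-averaging the identity $\ip{h_n(\ip{u}{\cdot})}{h_n(\ip{w}{\cdot})}=n!\,\ip{u}{w}^n$, rather than substituting the expansion from Proposition~\ref{prop:radialized-ridge}. You also handle odd $n$ by showing this norm vanishes (via $V\mapsto -V$), whereas the paper applies Lemma~\ref{lem:spherical-moments} termwise.
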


\begin{proof}
Since $F$ is radial, $P_nF$ is radial. Hence
\begin{align*}
  \ip{P_nF}{P_n(g^v)}
  &=\ip{P_nF}{\Pi_{\Rad}P_n(g^v)}\\
  &=\ip{P_nF}{P_n\Pi_{\Rad}(g^v)},
\end{align*}
where the second equality follows from
Proposition~\ref{prop:radialization-commutes}. By
Proposition~\ref{prop:radialized-ridge},
\[
  P_n\Pi_{\Rad}(g^v)(x)
  =
  \sum_{|\alpha|=n}
  \frac{\ip{g}{h_n}_{L^2(\gamma)}}{\alpha!}
  \E[V^\alpha]H_\alpha(x),
\]
where $V$ is uniformly distributed on $\Sd$.

If $n$ is odd, every $\alpha\in\N^d$ with $|\alpha|=n$ has an odd
component. Lemma~\ref{lem:spherical-moments} therefore gives
$\E[V^\alpha]=0$ for every term in the sum. Thus
$P_n\Pi_{\Rad}(g^v)=0$, and consequently
\[
  \ip{P_nF}{P_n(g^v)}=0.
\]

Assume now that $n$ is even. By Cauchy-Schwarz,
\begin{align}
  \label{eq:cs}
  \left|\ip{P_nF}{P_n(g^v)}\right|
  \le
  \norm{P_nF}\,\norm{\Pi_{\Rad}P_n(g^v)}.
\end{align}

Set $a_n=\ip{g}{h_n}_{L^2(\gamma)}$. By
Lemma~\ref{lem:hermite-ridge}, the degree-$n$ component of $g^v$ is
\[
  P_n(g^v)(x)
  =\frac{a_n}{n!}h_n(\ip{v}{x})
  =a_n\sum_{|\alpha|=n}\frac{v^\alpha}{\alpha!}H_\alpha(x).
\]
Using the two expansions above, we get
\begin{align*}
  \norm{\Pi_{\Rad}P_n(g^v)}^2
  &=\ip{P_n(g^v)}{\Pi_{\Rad}P_n(g^v)}\\
  &=a_n^2\sum_{|\alpha|=n}
    \frac{v^\alpha\E[V^\alpha]}{\alpha!}\\
  &=\frac{a_n^2}{n!}\E[\ip{v}{V}^n]
  =K_n\norm{P_n(g^v)}^2,
\end{align*}
which, together with \eqref{eq:cs}, proves the claim.
\end{proof}

We first derive from the preceding proposition the tail estimate needed to
prove the main lower bound.
For $n_0\in\N$, recall that we write
\[
  P_{\ge n_0}F=\sum_{n\ge n_0}P_nF.
\]

\begin{corollary}
\label{cor:tail-estimate}
Let $n_0$ be even. Then, for any
radial $F\in L^2(\gamma^d)$ and any
$g_1,\ldots,g_r\in L^2(\gamma)$ and $v_1,\ldots,v_r\in\Sd$,
\[
\left\|F-\sum_{j=1}^r g_j^{v_j}\right\|_{L^2(\gamma^d)}^2
\ge
\norm{P_{\ge n_0}F}^2
-
2\sqrt{K_{n_0}}\,
\norm{P_{\ge n_0}F}
\sum_{j=1}^r\norm{g_j}_{L^2(\gamma)}.
\]
\end{corollary}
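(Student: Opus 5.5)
The plan is to discard the low chaoses and expand the square. Write $G=\sum_{j=1}^r g_j^{v_j}$ and $Q=P_{\ge n_0}$. Since $Q$ is an orthogonal projection (the $\mathcal H_n$ are mutually orthogonal and $\sum_n P_n=I$), we have
\[
\norm{F-G}^2\ge\norm{Q(F-G)}^2=\norm{QF}^2-2\ip{QF}{QG}+\norm{QG}^2\ge\norm{QF}^2-2\ip{QF}{QG}.
\]
It therefore suffices to prove that $|\ip{QF}{QG}|\le\sqrt{K_{n_0}}\,\norm{QF}\sum_j\norm{g_j}$. By linearity and the triangle inequality, this reduces to the single-ridge bound
\[
|\ip{QF}{Q(g^v)}|\le\sqrt{K_{n_0}}\,\norm{QF}\,\norm{g}_{L^2(\gamma)}
\]
for one $g\in L^2(\gamma)$ and $v\in\Sd$.

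For the single-ridge bound, I use orthogonality of the chaoses to write
\[
\ip{QF}{Q(g^v)}=\sum_{n\ge n_0}\ip{P_nF}{P_n(g^v)},
\]
which converges absolutely by Cauchy--Schwarz in $\ell^2$. Proposition~\ref{prop:chaos-correlation} kills the odd-$n$ terms and bounds each even term by $\sqrt{K_n}\norm{P_nF}\norm{P_n(g^v)}$. By Remark~\ref{rem:moment-bounds}, $K_n$ is nonincreasing along even degrees, so $K_n\le K_{n_0}$ for every even $n\ge n_0$; this is where the evenness of $n_0$ is used. Hence
\[
|\ip{QF}{Q(g^v)}|\le\sqrt{K_{n_0}}\sum_{n\ge n_0}\norm{P_nF}\norm{P_n(g^v)}\le\sqrt{K_{n_0}}\,\norm{QF}\,\norm{Q(g^v)}
\]
by Cauchy--Schwarz over $n$.

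Finally, $\norm{Q(g^v)}\le\norm{g^v}_{L^2(\gamma^d)}=\norm{g}_{L^2(\gamma)}$, using the observation after the definition of $g^v$. Substituting these bounds into the expanded square gives the stated inequality.

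The only step needing care is exchanging the infinite sum over chaoses with the inner product, which is justified by the $L^2$ convergence of $\sum_n P_n$. The degenerate case $n_0=0$ requires $K_0=1$, a value not covered by Proposition~\ref{prop:max-moment}. In that case I would handle the $n=0$ term directly: its bound $|\ip{P_0F}{P_0 g^v}|\le\norm{P_0F}\norm{P_0g^v}$ is just Cauchy--Schwarz. I expect no real obstacle. The monotonicity of $K_n$ is the key input, and the remark has already supplied it.
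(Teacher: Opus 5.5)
Your proof is correct and is essentially the paper's argument: project onto $P_{\ge n_0}$, expand the square and drop $\norm{P_{\ge n_0}G}^2$, and bound each cross term chaos by chaos via the correlation proposition. You then use $K_n\le K_{n_0}$ along even degrees, Cauchy--Schwarz over $n$, and $\norm{P_{\ge n_0}(g^v)}\le\norm{g}_{L^2(\gamma)}$, exactly as the paper does; your separate treatment of $n_0=0$ is harmless but not needed, since the product formula for $K_n$ in the introduction already gives $K_0=1$ as an empty product.
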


\begin{proof}
Note that it holds that
\begin{align*}
  \left\|F-\sum_{j=1}^r g_j^{v_j}\right\|^2
& \ge
\left\|P_{\ge n_0}F-\sum_{j=1}^r P_{\ge n_0}(g_j^{v_j})\right\|^2\\
& \ge
\norm{P_{\ge n_0}F}^2
-
2\sum_{j=1}^r
\left|
\ip{P_{\ge n_0}F}{P_{\ge n_0}(g_j^{v_j})}
\right|
\end{align*}

Using orthogonality and
Proposition~\ref{prop:chaos-correlation} we get
\[
\left|
\ip{P_{\ge n_0}F}{P_{\ge n_0}(g_j^{v_j})}
\right|
\le
\sum_{\substack{n\ge n_0\\ n\ {\rm even}}}
\sqrt{K_n}\,\norm{P_nF}\,\norm{P_n(g_j^{v_j})}.
\]
Since $K_n\le K_{n_0}$ for $n\ge n_0$ along even degrees
(see Remark~\ref{rem:moment-bounds}),
Cauchy-Schwarz gives
\[
\le
\sqrt{K_{n_0}}\,
\left(\sum_{n\ge n_0}\norm{P_nF}^2\right)^{1/2}
\left(\sum_{n\ge n_0}\norm{P_n(g_j^{v_j})}^2\right)^{1/2}.
\]
The last factor is at most $\norm{g_j}_{L^2(\gamma)}$, so the claim follows.
\end{proof}

The tail estimate immediately proves the main lower bound.

\begin{proof}[Proof of Theorem~\ref{thm:ridge-lb}]
Recall that
$A=\norm{P_{\ge n_0}F}_{L^2(\gamma^d)}$ and that $M$ is such that
$\norm{g_j}_{L^2(\gamma)}\le M$ for every $1\le j\le r$.
If $A=0$, the conclusion is simply $r\ge0$ and is immediate.
We therefore assume $A>0$. Suppose that
\[
  \left\|F-\sum_{j=1}^r g_j^{v_j}\right\|_{L^2(\gamma^d)}^2
  \le \frac{A^2}{2}.
\]
The corollary implies that
\[
  \frac{A^2}{2}
  \ge
  A^2-2\sqrt{K_{n_0}}\,A\,rM.
\]
Rearranging the terms yields
\[
  r\ge \frac{A}{4M\sqrt{K_{n_0}}}.
\]
\end{proof}

\section{Depth Separation}
\label{sec:depth-separation}

We now turn to the proof of Theorem~\ref{thm:depth-sep}. We consider a rapidly
oscillating function of $\norm{x}^2$ that has substantial weight in the
high-chaos energy but is still well approximated by a depth-3 ReLU neural
network.

Following the oscillatory constructions in
\cite{Daniely2017}, for parameters $d,m>0$, define the
radial function
\[
  F_m(x)=\sin\!\left(\pi m\,\frac{\|x\|^2}{d}\right), \qquad x\in\R^d.
\]
Since $|F_m|\le 1$, this is well-defined in $L^2(\gamma^d)$.  Set
\[
  f_m(t)=\sin(\pi mt), \qquad t\ge0,
\]
and denote by
\[
  \rho_d(t)
  =\frac{(d/2)^{d/2}}{\Gamma(d/2)}
    t^{d/2-1}e^{-dt/2},
  \qquad t>0,
\]
the density of $\norm{X}^2/d$ for $X\sim\gamma^d$.  We first prove two
preliminary facts.

\begin{lemma}
\label{lem:radial-poly-distance}
Let $d\ge1$ and let $F\in L^2(\gamma^d)$ be of the form
$F(x)=\phi(\norm{x}^2/d)$ for a measurable function
$\phi\colon[0,\infty)\to\R$. For every even integer $n_0\ge2$,
\[
  \norm{P_{\ge n_0}F}^2_{L^2(\gamma^d)}
  =\inf_{\substack{q\in\R[t]\\\deg q<n_0/2}}
    \int_0^\infty(\phi(t)-q(t))^2\rho_d(t)\,dt.
\]
\end{lemma}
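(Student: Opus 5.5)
The plan is to identify $P_{<n_0}F := \sum_{n<n_0}P_nF$ as the orthogonal projection of $F$ onto the space of radial polynomials of degree $<n_0$, and then identify these with polynomials in $\norm{x}^2/d$ of degree $<n_0/2$. By Pythagoras, $\norm{P_{\ge n_0}F}^2 = \operatorname{dist}(F,\bigoplus_{n<n_0}\mathcal H_n)^2$. Since $\bigoplus_{n<n_0}\mathcal H_n$ is exactly the space $\mathcal P_{<n_0}$ of polynomials of total degree $<n_0$, this distance equals $\inf_{p\in\mathcal P_{<n_0}}\norm{F-p}^2$. The infimum is attained at $p^*=\sum_{n<n_0}P_nF$. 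By Proposition~\ref{prop:radialization-commutes}, $P_n F = P_n\Pi_{\Rad}F=\Pi_{\Rad}P_nF$, so $p^*$ is radial. Hence the infimum may be restricted to radial polynomials of degree $<n_0$.

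The key algebraic step is to show that a radial polynomial $p$ of degree $<n_0$ has the form $q(\norm{x}^2)$ with $\deg q<n_0/2$. I would argue this in three moves.
\begin{enumerate}
\item Restrict $p$ to the line $\R e_1$. Radiality gives $p(x)=p(\norm{x}e_1)=:s(\norm{x})$, where $s(t)=p(te_1)$ is a one-variable polynomial of degree $<n_0$.
\item Use $p(-x)=p(x)$ (the map $-I\in O(d)$) to conclude that $s$ is even, so $s(t)=q(t^2)$ with $2\deg q<n_0$, i.e.\ $\deg q<n_0/2$.
\item Conversely, any $q(\norm{x}^2)$ with $\deg q<n_0/2$ is a radial polynomial of degree $\le 2\deg q<n_0$.
\end{enumerate}
There is one subtlety: radiality of $F\in L^2$ only holds almost everywhere. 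For the polynomial $p^*$, however, a.e.\ invariance under each $U$ upgrades to everywhere invariance by continuity.

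Finally, rescale $q(t)\mapsto q(dt)$, which preserves degree, and change variables $t=\norm{X}^2/d$ with $X\sim\gamma^d$, whose density is $\rho_d$. This gives
\[
\norm{F-q(\norm{\cdot}^2/d)}^2_{L^2(\gamma^d)}=\int_0^\infty(\phi(t)-q(t))^2\rho_d(t)\,dt,
\]
which yields the claimed identity.

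The main obstacle is the identification $\bigoplus_{n<n_0}\mathcal H_n=\mathcal P_{<n_0}$. It follows because $h_k$ is monic of degree $k$: the $H_\alpha$ with $|\alpha|<n_0$ are therefore obtained from the monomials $x^\alpha$ with $|\alpha|<n_0$ by a unitriangular change of basis. I would state this explicitly. I would also note that the infimum over all of $\mathcal P_{<n_0}$ is bounded below by $\norm{P_{\ge n_0}F}^2$, since $F-p$ and $F$ have the same components in degrees $\ge n_0$; this gives the inequality ``$\le$'' directly, while the radial minimizer $p^*$ gives ``$\ge$''.
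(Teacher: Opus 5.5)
Your proposal is correct and follows the same route as the paper: express $\norm{P_{\ge n_0}F}$ as the distance from $F$ to $\bigoplus_{n<n_0}\mathcal H_n$, use Proposition~\ref{prop:radialization-commutes} to see that the minimizer $P_{<n_0}F$ is radial, identify radial polynomials of degree $<n_0$ with the functions $q(\norm{x}^2/d)$ with $\deg q<n_0/2$, and change variables to the density $\rho_d$. The only difference is that you supply details the paper asserts without proof. These are the unitriangular identification of $\bigoplus_{n<n_0}\mathcal H_n$ with polynomials of degree $<n_0$, the restriction-to-a-line and evenness argument, and the upgrade from a.e.\ to everywhere rotation invariance for polynomials.
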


\begin{proof}
If $\mathcal R$ denotes the subspace of radial functions and
$\mathcal H_{<n_0}$ the direct sum of the Gaussian chaoses of
degree less than $n_0$, then
\[
  \norm{P_{\ge n_0}F}_{L^2(\gamma^d)}
  =\inf_{G\in\mathcal H_{<n_0}\cap\mathcal R}
    \norm{F-G}_{L^2(\gamma^d)},
\]
since the orthogonal projection $P_{<n_0}F$ is the function in
$\mathcal H_{<n_0}$ that minimizes the distance to $F$. We can restrict the infimum over the radial functions,
since Proposition~\ref{prop:radialization-commutes} implies that
$P_{<n_0}F$ is radial as well.

Every function in $\mathcal H_{<n_0}\cap\mathcal R$ is a radial polynomial of
degree less than $n_0$, and hence has the form
$q(\norm{x}^2/d)$ with $\deg q<n_0/2$.  Conversely, every polynomial of this
form belongs to $\mathcal H_{<n_0}\cap\mathcal R$.  
Since, for a random variable 
$X\sim\gamma^d$, 
$\norm{X}^2/d$ has density $\rho_d$ and $F(X)=\phi(\norm{X}^2/d)$, we get
the desired result.
\end{proof}

\begin{lemma}
\label{lem:chi-square-local-density}
There exist universal constants $c_1,d_0>0$ such that, for every $d\ge d_0$,
the density $\rho_d$ satisfies
\[
  \rho_d(t)\ge c_1\sqrt d
\]
for every $t\in[1-1/\sqrt d,\,1+1/\sqrt d]$.
\end{lemma}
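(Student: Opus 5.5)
We will bound $\rho_d$ directly from its explicit formula. Writing $k=d/2$, we have
\[
  \rho_d(t)=\frac{k^{k}}{\Gamma(k)}\,t^{k-1}e^{-kt}
  =\frac{k^{k}e^{-k}}{\Gamma(k)}\cdot\frac{1}{t}\cdot\exp\bigl(k(\log t-t+1)\bigr).
\]
The proof then has three steps, one for each factor.

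\textbf{Step 1: the prefactor.} We control $k^ke^{-k}/\Gamma(k)$ with a Stirling lower bound. We use the standard inequality $\Gamma(k)\le\sqrt{2\pi/k}\,k^ke^{-k}e^{1/(12k)}$, valid for all $k>0$ (for instance via $\Gamma(k)=\Gamma(k+1)/k$ and Robbins' bound). This gives
\[
  \frac{k^ke^{-k}}{\Gamma(k)}\ge \sqrt{\frac{k}{2\pi}}\,e^{-1/(12k)}\ge c\sqrt d
\]
for $d\ge d_0$. If we want to avoid non-integer Stirling, we can instead use log-convexity of $\Gamma$ to compare with factorials at neighbouring integers. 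This costs only a constant factor.

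\textbf{Step 2: the exponential factor.} For $|t-1|\le 1/\sqrt d$, set $u=t-1$. We show
\[
  \log(1+u)-u\ge -u^2
\]
for $|u|\le 1/2$, e.g.\ from the Taylor remainder $-u^2/(2(1+\xi)^2)$ with $1+\xi\ge 1/2$. Then
\[
  k(\log t-t+1)\ge -\frac d2\cdot\frac1d=-\frac12,
\]
so this factor is at least $e^{-1/2}$, provided $d\ge4$ so that $1/\sqrt d\le1/2$.

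\textbf{Step 3: the $1/t$ factor.} Since $t\le 1+1/\sqrt d\le 3/2$, we get $1/t\ge 2/3$.

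Multiplying the three bounds gives $\rho_d(t)\ge c_1\sqrt d$ with $c_1=\tfrac23e^{-1/2}c$. The only potential obstacle is the Stirling step for half-integer $k$. It is routine, but it needs a clean reference or the log-convexity argument, and some care is needed to make it formally checkable. The rest is elementary calculus.
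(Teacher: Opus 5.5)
Your proposal is correct and is essentially the paper's argument: the paper also writes $\rho_d(t)=\rho_d(1)\,t^{d/2-1}e^{-d(t-1)/2}$, controls $\rho_d(1)=k^ke^{-k}/\Gamma(k)$ by Stirling, and bounds the remaining factor via $\log t\ge(t-1)-(t-1)^2$ for $t\ge1/2$; it handles the extra $-\log t$ with $\log t\le t-1$ where you use $1/t\ge2/3$, and your explicit Stirling bound just makes $d_0$ concrete. One small slip is that the Taylor remainder with $1+\xi\ge1/2$ only gives $\log(1+u)-u\ge-2u^2$, but your sharper inequality does hold for $u\ge-1/2$ (the derivative of $\log(1+u)-u+u^2$ is $u(1+2u)/(1+u)$), and in any case $-2u^2$ would suffice at the cost of $e^{-1}$ instead of $e^{-1/2}$.
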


\begin{proof}
Note that, by Stirling's formula,
\[
  \rho_d(1)
  =
  \frac{(d/2)^{d/2}e^{-d/2}}{\Gamma(d/2)}
  \sim \sqrt{\frac{d}{4\pi}}.
\]
Thus, there exists $d_0$ such that, for every $d\ge d_0$,
\[
  \rho_d(1)\ge \frac12\sqrt{\frac{d}{4\pi}}.
\]
If $|t-1|\le1/\sqrt d$, then $t\ge1/2$ for $d\ge4$. Using
\[
  (t-1)-(t-1)^2\le\log t\le t-1,
\]
we observe that
\begin{align*}
  \log\frac{\rho_d(t)}{\rho_d(1)}
  &=\left(\frac d2-1\right)\log t-\frac d2(t-1)\\
  &=\frac d2\bigl(\log t-(t-1)\bigr)-\log t\\
  &\ge-\frac d2(t-1)^2-(t-1)\\
  &\ge-\frac12-\frac1{\sqrt d}
  \ge-1.
\end{align*}
Putting everything together, there exists a universal constant $c_1>0$ such
that
\[
  \rho_d(t)\ge c_1\sqrt{d}
\]
for all $t\in[1-1/\sqrt{d},\,1+1/\sqrt{d}]$ and $d\ge d_0$.
\end{proof}

We now apply these two facts to the radial sine function.  The key ingredient
is the following lemma.

\begin{lemma}
\label{lem:energy-lb}
There exist universal constants $c,d_0>0$ such that for all $d\ge d_0$, all
even integers $n_0\ge 2$, and all $m\ge 4n_0\sqrt{d}$,
\[
  \norm{P_{\ge n_0}F_m}_{L^2(\gamma^d)}\ge c.
\]
\end{lemma}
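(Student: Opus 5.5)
By Lemma~\ref{lem:radial-poly-distance}, applied with $\phi=f_m$, it suffices to show that
\[
  \int_0^\infty (f_m(t)-q(t))^2\rho_d(t)\,dt\ge c^2
\]
for every polynomial $q$ with $\deg q<n_0/2$. Since $\rho_d\ge0$, I will discard everything outside the window $I=[1-1/\sqrt d,\,1+1/\sqrt d]$. On $I$, Lemma~\ref{lem:chi-square-local-density} gives $\rho_d\ge c_1\sqrt d$. Hence it suffices to prove
\[
  \int_I (\sin(\pi m t)-q(t))^2\,dt\ge \frac{c'}{\sqrt d}
\]
uniformly in $q$. The factor $\sqrt d$ then cancels against the interval length $2/\sqrt d$.

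The core is a one-dimensional oscillation argument. A polynomial $q$ of degree $k<n_0/2$ has a derivative with at most $k-1$ real zeros, so $I$ splits into at most $k$ intervals on which $q$ is monotone.

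The function $\sin(\pi m t)$ has period $2/m$. Partition $I$ into consecutive full periods $J_1,\dots,J_N$. Their number is $N\ge \lfloor (2/\sqrt d)/(2/m)\rfloor = \lfloor m/\sqrt d\rfloor\ge 4n_0$, using $m\ge 4n_0\sqrt d$.

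At most $k-1<n_0/2$ of these periods contain a turning point of $q$. So at least $N-n_0/2\ge N/2$ periods (say $N\ge 4n_0$ gives plenty of room) lie entirely inside an interval where $q$ is monotone.

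On such a period $J$, I will show $\int_J(\sin(\pi m t)-q)^2\,dt\ge c''|J|$. Inside $J$, consider the two subintervals of length $|J|/6$ around the maximum of the sine, where $\sin\ge 1/2$, and around the minimum, where $\sin\le -1/2$.

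Since $q$ is monotone on $J$, at least one of the following holds:
\begin{itemize}
  \item $q\le 0$ on the whole first subinterval, giving squared error $\ge 1/4$ there;
  \item $q\ge 0$ on the whole second subinterval, giving squared error $\ge 1/4$ there.
\end{itemize}
Neither failing would require $q$ to be positive somewhere before the peak and negative throughout after it (or vice versa), which is consistent with monotonicity. So I need a sharper split. Order the two subintervals in time, call them $A$ then $B$. If $q$ is nondecreasing, then either $q|_A\le0$ fails at some point, or not. In the bad case, where the sine is positive on $A$ and negative on $B$, I instead use $A'$ and $B'$ in the reverse order: the next minimum-region comes before a later maximum-region. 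Each period contains both orders if I take a slightly longer window, say $1.5$ periods.

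Concretely, I pick for each window a positive region followed by a negative region followed by a positive region. A monotone $q$ cannot be simultaneously $>0$ somewhere on the first, $<0$ somewhere on the second, and $>0$ somewhere on the third. Hence it has the wrong sign everywhere on one of them, giving error $\ge (1/4)\cdot |J|/6$.

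Summing over $\ge N/2$ disjoint good windows of length $\asymp 1/m$, the total is $\gtrsim N/m\gtrsim 1/\sqrt d$. This is the required bound with universal $c'$, and taking $c^2=c_1c'$ finishes the proof. I expect the main care to be in the counting: disjoint windows of $1.5$ periods, discarding those containing critical points of $q$ and the boundary remnant. This is routine once $N\ge 4n_0$ ensures the discarded fraction is at most half.
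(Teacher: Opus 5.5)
Your proof is correct and shares the paper's outer structure: reduction via Lemma~\ref{lem:radial-poly-distance}, restriction to $[1-1/\sqrt d,1+1/\sqrt d]$ where $\rho_d\ge c_1\sqrt d$, and an oscillation count. The one-dimensional core, however, is different.

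\textbf{The paper's core.} It splits the rescaled window into half-periods between consecutive zeros of the sine and discards those containing a zero of $q$. It then groups the survivors into maximal blocks on which $q$ has constant sign. The alternation of the sine within each block gives at least $\lfloor s_g/2\rfloor$ opposite-sign half-periods, each contributing the full $\int\sin^2=\pi/2$ in the rescaled variable.

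\textbf{Your core.} You discard windows containing a zero of $q'$, so $q$ is monotone on each surviving window. Your positive--negative--positive argument (the one you end with, not the abandoned two-bullet version) then correctly forces $q$ to have the wrong sign on a whole region where $|\sin|\ge 1/2$.

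\textbf{Comparison.} Your version gives simpler counting: each good window contributes on its own, with no block or parity bookkeeping. The price is a smaller constant and longer windows, whereas the paper extracts the full $\sin^2$ mass on each mismatched half-period. Monotonicity is also more than you need. A monotone function changes sign at most once, so if you discard windows containing a zero of $q$ itself, a single period (one positive and one negative half-period) already suffices. That is essentially the paper's argument without the blocks.

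\textbf{One fix for the write-up.} Anchor the windows to the phase of $\sin(\pi m t)$, for example starting each where $\pi m t\equiv\pi/3\pmod{2\pi}$, spaced two periods apart. An arbitrarily placed $1.5$-period window need not contain a full positive--negative--positive or negative--positive--negative pattern. With anchored windows and $m\ge 4n_0\sqrt d$, discarding fewer than $n_0/2$ of the $\asymp m/\sqrt d$ windows still leaves a constant fraction, and the rest of your estimate goes through.
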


\begin{proof}
By Lemma~\ref{lem:radial-poly-distance},
\[
  \norm{P_{\ge n_0}F_m}^2_{L^2(\gamma^d)}
  =\inf_{\substack{q\in\R[t]\\\deg q<n_0/2}}
    \int_0^\infty(f_m(t)-q(t))^2\rho_d(t)\,dt.
\]

Fix an arbitrary polynomial $q$ of degree less than $n_0/2$. By
Lemma~\ref{lem:chi-square-local-density}, we can restrict the integral to an
interval around $1$ on which $\rho_d$ is bounded below. Therefore,
\begin{equation}
\label{eq:density-lb}
  \int_0^\infty(f_m(t)-q(t))^2\rho_d(t)\,dt
  \;\ge\;
  c_1\sqrt{d}
  \int_{1-1/\sqrt{d}}^{1+1/\sqrt{d}}(f_m(t)-q(t))^2\,dt.
\end{equation}

We next rescale this interval and compare the signs of $f_m$ and $q$.
Substituting $t=1+u/(\pi m)$ gives
\begin{equation}
\label{eq:cov}
\begin{aligned}
  &\int_{1-1/\sqrt{d}}^{1+1/\sqrt{d}}(f_m(t)-q(t))^2\,dt\\
  &\qquad=
  \frac{1}{\pi m}\int_{-\pi m/\sqrt d}^{\pi m/\sqrt d}
  \left(\sin(\pi m+u)-q\!\left(1+\frac{u}{\pi m}\right)\right)^2\,du.
\end{aligned}
\end{equation}

The zeros of $\sin(\pi m+u)$ are spaced $\pi$ apart.
Hence $[-\pi m/\sqrt d,\,\pi m/\sqrt d]$ contains
\[
  N\ge 2\left\lfloor\frac{m}{\sqrt d}\right\rfloor-2
\]
full intervals between consecutive zeros of $\sin(\pi m+u)$; denote these half-period
intervals by $K_1,\ldots,K_N$.  Each $K_j$ has length $\pi$, and the sign of
$\sin(\pi m+u)$ alternates from $K_j$ to $K_{j+1}$.

If $q\equiv0$, the desired estimate follows by summing
$\int_{K_j}\sin^2(\pi m+u)\,du=\pi/2$ over the intervals $K_j$.
Suppose, therefore, that $q$ is nonzero. As a polynomial in $u$,
$q(1+u/(\pi m))$ also has degree less than $n_0/2$, so it has at most
$n_0/2-1$ real zeros. Discard every
$K_j$ whose interior contains one of these zeros.  At most $n_0/2-1$ intervals
are discarded, so the number $N'$ of remaining intervals satisfies
\[
  N'\ge N-\frac{n_0}{2}+1.
\]

Arrange the remaining $K_j$ into maximal blocks
$R_1,\ldots,R_G$ of intervals that are consecutive in the original
list $K_1,\ldots,K_N$ (with no discarded interval between them) and on whose
interiors $q(1+u/(\pi m))$ has the same sign.  For each block, define
\[
  s_g:=\#\{K_j:K_j\text{ belongs to }R_g\},
\]
and note that
\[
  \sum_{g=1}^G s_g=N'.
\]
Figure~\ref{fig:interval-blocks} illustrates this decomposition.

\begin{figure}[tbp]
\centering
\includegraphics[width=\linewidth]{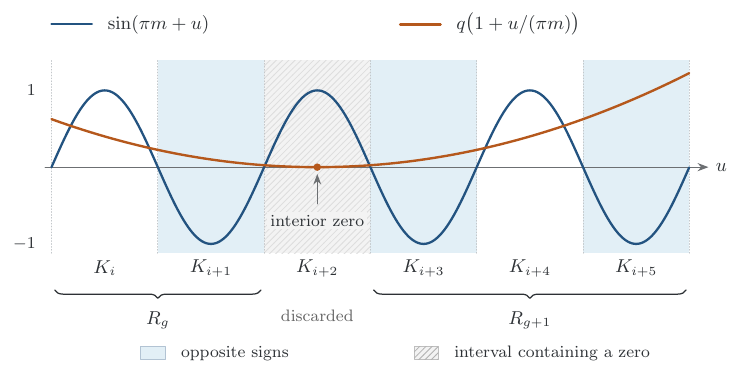}
\footnotesize\setstretch{1}
\caption{Two blocks separated by the discarded $K_{i+2}$.
Shading indicates opposite signs.}
\label{fig:interval-blocks}
\end{figure}

Each separation
between two successive blocks requires a zero of $q(1+u/(\pi m))$: either
the blocks are separated by a discarded interval, whose interior contains a zero
of $q(1+u/(\pi m))$, or they are adjacent and $q(1+u/(\pi m))$ changes sign at their common
endpoint. The closed gaps between successive blocks are pairwise disjoint, so
distinct separations use distinct zeros.
Since $q(1+u/(\pi m))$ has at
most $n_0/2-1$ zeros, note that the bound
\[
  G\le\frac{n_0}{2}
\]
holds.

Fix $R_g$ and note that 
the sign of $q(1+u/(\pi m))$ is the same on all of its $s_g$
intervals, whereas the sign of $\sin(\pi m+u)$ alternates.  
Therefore $\sin(\pi m+u)$ and $q(1+u/(\pi m))$ have opposite signs on at least
\[
  \left\lfloor\frac{s_g}{2}\right\rfloor
  \ge \frac{s_g-1}{2}
\]
intervals of $R_g$. Consequently, the total number $M_{\mathrm{opp}}$ of
opposite-sign intervals satisfies
\[
  M_{\mathrm{opp}}
  \ge \sum_{g=1}^G\frac{s_g-1}{2}
  =\frac{N'-G}{2}
  \ge\frac{N'-n_0/2}{2}
  \;\ge\;
  \frac{N-n_0}{2}.
\]
Since $m\ge 4n_0\sqrt{d}$,
we conclude that 
$(N-n_0)/2\ge N/4$, and
$N/4\ge m/(2\sqrt{d})-1\ge m/(4\sqrt{d})$.
It follows that $M_{\mathrm{opp}}\ge m/(4\sqrt{d})$.

On any interval $K_j$ where $\sin(\pi m+u)$ and $q(1+u/(\pi m))$ have opposite signs on the
interior, we have
\[
\begin{aligned}
  &\int_{K_j}\left(\sin(\pi m+u)-q\!\left(1+\frac{u}{\pi m}\right)\right)^2\,du\\
  &\qquad\ge\int_{K_j}\sin^2(\pi m+u)\,du=\frac{\pi}{2}.
\end{aligned}
\]
Therefore
\begin{equation}
\label{eq:osc-lb}
\begin{aligned}
  &\int_{-\pi m/\sqrt d}^{\pi m/\sqrt d}
  \left(\sin(\pi m+u)-q\!\left(1+\frac{u}{\pi m}\right)\right)^2\,du\\
  &\qquad\ge
  \frac{m}{4\sqrt{d}}\cdot\frac{\pi}{2}
  =
  \frac{\pi m}{8\sqrt{d}}.
\end{aligned}
\end{equation}

Combining \eqref{eq:density-lb}, \eqref{eq:cov}, \eqref{eq:osc-lb}:
\[
  \int_0^\infty(f_m(t)-q(t))^2\rho_d(t)\,dt
  \;\ge\;
  c_1\sqrt{d}\cdot\frac{1}{\pi m}\cdot\frac{\pi m}{8\sqrt{d}}
  =
  \frac{c_1}{8}.
\]
Since $q$ was arbitrary,
$\norm{P_{\ge n_0}F_m}^2_{L^2(\gamma^d)}\ge c_1/8$.
Taking $c=\sqrt{c_1/8}>0$ proves the lemma.
\end{proof}

Now we are ready to prove Theorem~\ref{thm:depth-sep}.

\begin{proof}[Proof of Theorem~\ref{thm:depth-sep}]

\textup{(i)} Set $n_0=d$ and $m=4d^{3/2}$. Since $d$ is even, $n_0$ is an
even integer $\ge 2$, and
$m=4d^{3/2}=4d\cdot\sqrt{d}=4n_0\sqrt{d}$, so the hypotheses of
Lemma~\ref{lem:energy-lb} are satisfied. Moreover,
\[
  F_m(x)
  =\sin\!\left(\pi(4d^{3/2})\frac{\norm{x}^2}{d}\right)
  =\sin\!\left(4\pi d^{1/2}\norm{x}^2\right)
  =F(x).
\]
Therefore, Lemma~\ref{lem:energy-lb} gives
$A:=\norm{P_{\ge d}F}_{L^2(\gamma^d)}\ge c$. By Remark~\ref{rem:moment-bounds},
 $K_n\le(n/(d+n))^{n/2}$ whenever $d\ge n$.  Taking $n=d$,
\[
  K_d
  \le
  \left(\frac{d}{d+d}\right)^{d/2}
  =
  \left(\frac12\right)^{d/2}
  =
  2^{-d/2},
\]
so $1/\sqrt{K_d}\ge 2^{d/4}=2^{\Omega(d)}$.
Theorem~\ref{thm:ridge-lb} then gives
\[
  r\ge \frac{A}{4M\sqrt{K_d}}
  \ge \frac{c}{4M}\,2^{d/4}.
\]

\medskip
\textup{(ii)} The construction follows the strategy in
\cite{Daniely2017}. Fix $0<\epsilon\le1$ and set
$\eta=\epsilon/(16\pi d^{1/2})$, so that $0<\eta\le1$.
For every $R>0$, we first construct a globally bounded depth-3 network
$\mathcal N_{R,\epsilon}$ satisfying
\[
  \sup_{\norm{x}\le R}|F(x)-\mathcal N_{R,\epsilon}(x)|\le\frac{\epsilon}{2}.
\]

First, we construct a depth-2 network that approximates $\norm{x}^2$.
As in Lemma~5 of \cite{Daniely2017}, we use piecewise-linear interpolation.
We write the interpolants as linear combinations of
$\sigma(t-a)-\sigma(t-b)$, so that they are constant outside the interpolation
interval. Let $K\in\N$ and $t_j=jR/K$ for $0\le j\le K$. The function
\[
  p(t)=\sum_{j=0}^{K-1}(t_j+t_{j+1})\bigl(\sigma(t-t_j)-\sigma(t-t_{j+1})\bigr)
\]
is the piecewise-linear interpolant of $t\mapsto t^2$ at the nodes
$t_0,\ldots,t_K$, since
$p(t)=t_j^2+(t_j+t_{j+1})(t-t_j)$ on $[t_j,t_{j+1}]$;
it vanishes on $(-\infty,0]$ and is constant on
$[R,\infty)$.
For $t\in[t_j,t_{j+1}]$, the preceding formula gives
\[
  0\le p(t)-t^2=(t-t_j)(t_{j+1}-t)
  \le\frac{R^2}{4K^2}\le\frac{4R^2}{K}.
\]
As
$p(t)+p(-t)=p(|t|)$, it follows that $|p(t)+p(-t)-t^2|\le4R^2/K$ for
$|t|\le R$. Set
\[
  \mathcal N_{\mathrm{sq}}(x)=\sum_{i=1}^d\bigl(p(x_i)+p(-x_i)\bigr),
  \qquad
  K=\left\lceil\frac{4dR^2}{\eta}\right\rceil.
\]
If $\norm{x}\le R$, then $|x_i|\le R$ for every $i$, and therefore
\[
  \left|\mathcal N_{\mathrm{sq}}(x)-\|x\|^2\right|
  \le\eta.
\]
Thus $\mathcal N_{\mathrm{sq}}$ is a depth-2 network with $4dK$ hidden units of
the form $\sigma(\pm x_i-t_j)$, that is, with weight vectors $\pm e_i$, biases
$-t_j\in[-R,0]$, and output coefficients $\pm(t_j+t_{j+1})$, bounded by $2R$.

Next, using the same approach, we approximate the sine function.
Let $K'\in\N$ and $u_l=-1+l(R^2+2)/K'$ for
$0\le l\le K'$, so that $u_0=-1$ and $u_{K'}=R^2+1$. Define
\[
  \lambda_l=\frac{\sin(4\pi d^{1/2}u_{l+1})-\sin(4\pi d^{1/2}u_l)}{u_{l+1}-u_l},
  \qquad 0\le l<K'.
\]
The function
\[
  \mathcal N_{\sin}(s)=\sin(4\pi d^{1/2}u_0)
  +\sum_{l=0}^{K'-1}\lambda_l\bigl(\sigma(s-u_l)-\sigma(s-u_{l+1})\bigr)
\]
is the piecewise-linear interpolant of $s\mapsto\sin(4\pi d^{1/2}s)$ at the
nodes $u_0,\ldots,u_{K'}$. Indeed, on $[u_l,u_{l+1}]$, the preceding summands
telescope, giving
\[
  \mathcal N_{\sin}(s)=\sin(4\pi d^{1/2}u_l)+\lambda_l(s-u_l).
\]
It is constant on $(-\infty,-1]$ and on $[R^2+1,\infty)$. On each grid
interval its values are convex combinations of the endpoint sine values,
so $|\mathcal N_{\sin}(s)|\le1$ for all $s\in\R$.
Since $|\sin(4\pi d^{1/2}s)-\sin(4\pi d^{1/2}t)|\le4\pi d^{1/2}|s-t|$, we have
$|\lambda_l|\le4\pi d^{1/2}$. Hence, for $s\in[u_l,u_{l+1}]$,
\[
  \begin{aligned}
  &|\mathcal N_{\sin}(s)-\sin(4\pi d^{1/2}s)|\\
  &\qquad\le |\sin(4\pi d^{1/2}u_l)-\sin(4\pi d^{1/2}s)|+|\lambda_l|(s-u_l)\\
  &\qquad\le 8\pi d^{1/2}(s-u_l)
  \le\frac{8\pi d^{1/2}(R^2+2)}{K'}.
  \end{aligned}
\]
Taking
\[
  K'=\left\lceil\frac{32\pi d^{1/2}(R^2+2)}{\epsilon}\right\rceil
\]
gives
\[
  \sup_{s\in[-1,R^2+1]}
  \left|\mathcal N_{\sin}(s)-\sin(4\pi d^{1/2}s)\right|
  \le\frac{\epsilon}{4}.
\]
Thus $\mathcal N_{\sin}$ is a depth-2 network with $2K'+1$ hidden units:
the units $\sigma(s-u_l)$ and $\sigma(s-u_{l+1})$, with output coefficients
$\lambda_l$ and $-\lambda_l$, and one constant unit $\sigma(1)=1$, with
output coefficient $\sin(4\pi d^{1/2}u_0)$. The input weights are $1$ or $0$,
the biases have absolute value at most $R^2+1$, and the output coefficients
have absolute value at most $4\pi d^{1/2}$.
Since $\eta\le1$, the interval
$[-1,R^2+1]$ contains $\mathcal N_{\mathrm{sq}}(x)$ whenever $\norm{x}\le R$.

Finally, composing these two networks, set
$\mathcal N_{R,\epsilon}:=\mathcal N_{\sin}\circ\mathcal N_{\mathrm{sq}}$. This is
a depth-3 network. For $\|x\|\le R$,
\[
  \begin{aligned}
  \left|\mathcal N_{R,\epsilon}(x)-F(x)\right|
  &\le
  \left|\mathcal N_{\sin}(\mathcal N_{\mathrm{sq}}(x))
    -\sin\bigl(4\pi d^{1/2}\mathcal N_{\mathrm{sq}}(x)\bigr)\right| \\
  &\quad
  +4\pi d^{1/2}\left|\mathcal N_{\mathrm{sq}}(x)-\|x\|^2\right| \\
  &\le
  \frac{\epsilon}{4}+4\pi d^{1/2}\eta=\frac{\epsilon}{2}.
  \end{aligned}
\]
Then $\mathcal N_{\mathrm{sq}}$ has width
$4dK=O(d^2R^2/\eta+d)=O(d^{5/2}(R^2+1)/\epsilon)$ and
$\mathcal N_{\sin}$ has width
$2K'+1=O(d^{1/2}(R^2+1)/\epsilon)$, so the width of
$\mathcal N_{R,\epsilon}$ is $O(d^{5/2}(R^2+1)/\epsilon)$.
For the parameters: the first-layer weight vectors $\pm e_i$ have
norm $1$ and the first-layer biases satisfy $|t_j|\le R$; the second-layer
weights satisfy $|t_j+t_{j+1}|\le2R$, and the second-layer biases are $-u_l$,
with $|u_l|\le R^2+1$, or equal to $1$; the output weights satisfy
$|\lambda_l|\le4\pi d^{1/2}$ and $|\sin(4\pi d^{1/2} u_0)|\le1$. Since $R\le R^2+1$, all
parameters are bounded by $2(R^2+1)+4\pi d^{1/2}=O(d^{1/2}+R^2)$.

It remains to choose $R$. If $X\sim\gamma^d$, then
\[
  \Pr(\norm{X}>R)
  \le
  \E\!\left[e^{\norm{X}^2/4}\right]e^{-R^2/4}
  =
  2^{d/2}e^{-R^2/4}.
\]
Set
\[
  R^2=2d\log 2+8\log(4/\epsilon).
\]
Then $\Pr(\norm{X}>R)\le\epsilon^2/16$. Since both $F$ and
$\mathcal N_{R,\epsilon}$ are bounded in absolute value by $1$,
\[
  \norm{F-\mathcal N_{R,\epsilon}}_{L^2(\gamma^d)}^2
  \le
  \frac{\epsilon^2}{4}+4\Pr(\norm{X}>R)
  \le
  \frac{\epsilon^2}{2}
  \le
  \epsilon^2.
\]
This choice gives $R^2=O(d+\log(1/\epsilon))$ and hence the claimed bounds.
\end{proof}

\section{Lean formalization}
\label{sec:lean-verification}

\begingroup
All theorems, lemmas, propositions, and corollaries have been formalized and verified
in Lean~4 using Mathlib. The Lean statements of the theorems, lemmas, and
corollaries have also been audited by the authors.
The formalization was developed with assistance from the large
language models (LLMs) Claude Fable (Anthropic) and GPT Sol (OpenAI).
The complete source code is available in the companion GitHub
repository.\footnote{\raggedright\url{https://github.com/cynthiabort/Depth-Separation-for-Neural-Networks-under-Gaussian-Measure---Lean-}\par}

Running \texttt{lake build} checks the mathematical library and the
standalone axiom-audit regression tests. The audit accepts only the standard
axioms \texttt{propext}, \texttt{Classical.choice}, and \texttt{Quot.sound};
neither admitted proofs nor project-defined axioms are used in the
mathematical library.

Below we reproduce the Lean statements of Theorems~\ref{thm:ridge-lb}
and~\ref{thm:depth-sep}, omitting their proofs. For
Theorem~\ref{thm:depth-sep}, the displayed declarations give its two parts;
the declaration \texttt{depth\_separation} combines them into a single
statement with universal constants.

All declarations belong to the \texttt{GaussianDepthSeparation} namespace.
\par\endgroup

\subsection*{Theorem~\ref{thm:ridge-lb}: ridge lower bound}
\begingroup
Let $d\ge1$, $n_0\in\N$ be even, $F\in L^2(\gamma^d)$ be radial,
$M>0$, and $g_j\in L^2(\gamma)$, $v_j\in\Sd$ with $\norm{g_j}_2\le M$.
Set $S=\sum_{j=1}^r g_j(\ip{v_j}{\cdot})$ and
$A=\norm{P_{\ge n_0}F}_2$. Then
\[
  \norm{F-S}_2^2\le A^2/2
  \quad\Longrightarrow\quad
  r\ge\frac{A}{4M\sqrt{K_{n_0}}}.
\]
\par\endgroup
\noindent\begin{minipage}{\linewidth}
\begin{lstlisting}[style=leanstatement]
theorem ridge_lower_bound {d : ℕ} (hd : 0 < d) {n₀ : ℕ} (hn₀ : Even n₀)
    (F : L2 d) (hF : IsRadial F) {r : ℕ} (g : Fin r → L2R) (v : Fin r → Vec d)
    (hv : ∀ j, ‖v j‖ = 1) {M : ℝ} (hM : 0 < M) (hgM : ∀ j, ‖g j‖ ≤ M)
    (happrox : ‖F - ∑ j, ridge (v j) (hv j) (g j)‖ ^ 2 ≤ ‖Pge d n₀ F‖ ^ 2 / 2) :
    ‖Pge d n₀ F‖ / (4 * M * Real.sqrt (Kmom d n₀)) ≤ (r : ℝ)
\end{lstlisting}
\nopagebreak[4]
\noindent{\footnotesize Source:
\nolinkurl{GaussianDepthSeparation/LowerBound.lean}, declaration \texttt{ridge\_lower\_bound}.\par}
\end{minipage}\par

\subsection*{Theorem~\ref{thm:depth-sep}: depth separation}
\begingroup
Set $F(x)=\sin(4\pi\sqrt d\,\norm{x}^2)$.
\textup{(i)} For even $d\ge d_0:=\texttt{chiSquareThreshold}$ (a fixed
threshold, at least $16$, chosen from Stirling's formula as in
Lemma~\ref{lem:chi-square-local-density}), $c=\texttt{cEnergy}>0$ and
$M,g_j,v_j,S$ as above,
\[
  \norm{F-S}_2^2\le c^2/2
  \quad\Longrightarrow\quad
  r\ge\frac{c\,2^{d/4}}{4M}.
\]
\par\endgroup
\noindent\begin{minipage}{\linewidth}
\begin{lstlisting}[style=leanstatement]
theorem depth_separation_lower (d : ℕ) (hde : Even d) (hd : chiSquareThreshold ≤ d)
    {r : ℕ} (g : Fin r → L2R) (v : Fin r → Vec d) (hv : ∀ j, ‖v j‖ = 1)
    {M : ℝ} (hM : 0 < M) (hgM : ∀ j, ‖g j‖ ≤ M)
    (happrox : ‖sepTarget d - ∑ j, ridge (v j) (hv j) (g j)‖ ^ 2 ≤
      cEnergy ^ 2 / 2) :
    cEnergy * 2 ^ ((d : ℝ) / 4) / (4 * M) ≤ (r : ℝ)
\end{lstlisting}
\nopagebreak[4]
\noindent{\footnotesize Source:
\nolinkurl{GaussianDepthSeparation/DepthSeparation.lean}, declaration \texttt{depth\_separation\_lower}.\par}
\end{minipage}\par

\begingroup
\textup{(ii)} There exists a universal $C>0$ such that, for every $d\ge1$
and $0<\epsilon\le1$, there exists a depth-3 ReLU network $\mathcal N$ with
hidden-layer widths $w_1,w_2$ such that
\[
  \norm{F-\mathcal N}_2\le\epsilon,
  \qquad
  \max\{w_1,w_2\}\le\frac{C d^{5/2}(d+\log(1/\epsilon))}{\epsilon},
\]
with all parameters bounded by $C(d+\log(1/\epsilon))$.
\par\endgroup
\noindent\begin{minipage}{\linewidth}
\begin{lstlisting}[style=leanstatement]
theorem depth_separation_upper_L2 :
    ∃ C : ℝ, 0 < C ∧ ∀ d : ℕ, 0 < d → ∀ ε : ℝ, 0 < ε → ε ≤ 1 →
      ∃ N : Net3 d, ∃ hN : MemLp N.eval 2 (γ d),
        ‖sepTarget d - hN.toLp N.eval‖ ≤ ε ∧
        max (N.n1 : ℝ) (N.n2 : ℝ) ≤
          C * ((d : ℝ) ^ 2 * Real.sqrt d) * ((d : ℝ) + Real.log (1 / ε)) / ε ∧
        N.WeightsLe (C * ((d : ℝ) + Real.log (1 / ε)))
\end{lstlisting}
\nopagebreak[4]
\noindent{\footnotesize Source:
\nolinkurl{GaussianDepthSeparation/GlobalUpper.lean}, declaration \texttt{depth\_separation\_upper\_L2}.\par}
\end{minipage}\par

\fi

\ifshowfullpaper
\enlargethispage{2\baselineskip}
\else
\setbox\hiddenbibliography=\vbox\bgroup
\fi

\ifshowfullpaper\else
\egroup
\fi

\end{document}